\newtheorem{theorem}{Theorem}
\newtheorem*{theorem*}{Theorem}
\newtheorem{definition}{Definition}
\definecolor{Gray}{gray}{0.85}
\definecolor{LightCyan}{rgb}{0.88,1,1}
\definecolor{blue1}{RGB}{0,128,255}
\definecolor{blue3}{RGB}{0,0,128}
\definecolor{darkpastelgreen}{rgb}{0.01, 0.75, 0.24}
\definecolor{cerulean}{rgb}{0.0, 0.48, 0.65}
\newcommand{\bu}{\mathbf{u}}
\newcommand{\bv}{\mathbf{v}}
\newcommand{\bx}{\mathbf{x}}
\newcommand{\bz}{\mathbf{z}}
\newcommand{\by}{\mathbf{y}}
\newcommand{\bw}{\mathbf{w}}
\newcommand{\bfx}{\mathbf{f}(\mathbf{x})}
\newcommand{\bff}{\mathbf{f}}
\newcommand{\ud}{~\mathrm{d}}
\newcommand{\bbR}{\mathbb{R}}
\newcommand{\I}{\text{I}(\bx)}
\newcommand{\btheta}{\boldsymbol{\theta}}
\newcommand{\bphi}{\boldsymbol{\varphi}}
\newcommand{\bpsi}{\boldsymbol{\psi}}
\newcommand{\bphipx}{\boldsymbol{\varphi}^{\prime}(\mathbf{x})}
\newcommand{\bphipxi}{\boldsymbol{\varphi}^{\prime}(\mathbf{x}^i)}
\newcommand{\bg}{\mathbf{g}_{\boldsymbol{\theta}}}
\newcommand{\lx}{\mathcal{L}_{\mathbf{x}}}
\newcommand{\lxhat}
{\hat{\mathcal{L}}_{\mathbf{x}}}
\newcommand{\lz}{\mathcal{L}_{\mathbf{z}}}
\newcommand{\lxp}{\mathcal{L}_{\mathbf{x}, p}}
\newcommand{\tlx}{\tilde{\mathcal{L}}_{\mathbf{x}}}
\newcommand{\tlxp}{\tilde{\mathcal{L}}_{\mathbf{x},p}}
\newcommand{\norm}[1]{\lVert#1\rVert}
\newcommand{\bbE}{\mathbb{E}}
\def\@onedot{\ifx\@let@token.\else.\null\fi\xspace}
\DeclareRobustCommand\onedot{\futurelet\@let@token\@onedot}
\def\eg{\emph{e.g}\onedot}
\def\ie{\emph{i.e}\onedot}
\def\wrt{w.r.t\onedot}
\def\iid{i.i.d\onedot}
\newcommand{\appcref}[1]{Appendix~\cref{#1}}
\title{Learning Macroscopic Dynamics from Partial Microscopic Observations}
\author{
  Mengyi Chen\textsuperscript{1}, ~Qianxiao Li\textsuperscript{1, 2}\\
  Department of Mathematics, National University of Singapore\textsuperscript{1},\\
  Institute for Functional Intelligent Materials, National University of Singapore\textsuperscript{2}\\
  \texttt{chenmengyi@u.nus.edu}, 
  ~~\texttt{qianxiao@nus.edu.sg}}
\begin{document}
\maketitle
\begin{abstract}
Macroscopic observables of a system are of keen interest in real applications such as the design of novel materials. Current methods rely on microscopic trajectory simulations, where the forces on all microscopic coordinates need to be computed or measured. However,  this can be computationally prohibitive for realistic systems.  In this paper, we propose a method to learn macroscopic dynamics requiring only force computations on a subset of the microscopic coordinates. Our method relies on a sparsity assumption: the force on each microscopic coordinate relies only on a small number of other coordinates. The main idea of our approach is to map the training procedure on the macroscopic coordinates back to the microscopic coordinates, on which partial force computations can be used as stochastic estimation to update model parameters. We provide a theoretical justification of this under suitable conditions. We demonstrate the accuracy, force computation efficiency, and robustness of our method on learning macroscopic closure models from a variety of microscopic systems, including those modeled by partial differential equations or molecular dynamics simulations. 
Our code is available at \url{https://github.com/MLDS-NUS/Learn-Partial.git}. 
\end{abstract}

\section{Introduction}
Macroscopic properties, including thestructural and dynamical properties, provide a way to describe and understand the collective behaviors of complex systems. 
In a wide range of real applications, researchers focus mainly on the macroscopic properties of a system, \eg, the viscosity and ionic diffusivity of liquid electrolytes for Li-ion batteries~\citep{dajnowicz2022high}. 
Macroscopic observables usually depend on the whole microscopic system, \eg, the calculation of mean squared displacement requires all microscopic coordinates during the simulation. With growing simulation and experimental data, data-driven learning of macroscopic properties from microscopic observations has become an active area of research ~\citep{zhang2018deepcg, wang2019machine, husic2020coarse, lee2020coarse, fu2023simulate, chen2024constructing}. 

Accurate calculation of macroscopic properties requires large-scale microscopic simulation.  
However, accurate force computations on all microscopic coordinates for large systems are extremely expensive~\citep{jia2020pushing, musaelian2023scaling}. 
 For example, in \emph{ab initio} molecular simulations, accurate forces need to be calculated from density functional theory (DFT). The computational cost of DFT limits its application to relatively small systems, typically ranging from a few hundred atoms to several thousand atoms, depending on the level of accuracy and computation resources ~\citep{hafner2006toward, luo2020parallel}. 
This poses a dilemma: Accurate macroscopic properties are obtained from large-scale microscopic simulation, but the computation of forces on all the microscopic coordinates is extremely challenging. 

To solve the dilemma, the corresponding question is: 
Can we still obtain accurate macroscopic observables even though only access to forces on a subset of the microscopic coordinates?
In this work, we develop a method to learn the dynamics of the macroscopic observables directly, while only forces on a subset of the microscopic coordinates are needed. 
Efficient partial computation of microscopic forces relies on the sparsity assumption, where the computation cost of forces on a subset of microscopic coordinates does not scale with the microscopic system size.  
To learn the dynamics of the macroscopic observables, we first map the macroscopic dynamics back to the microscopic space, then compare it with the partial microscopic forces. 
Our key idea is summarized in \cref{fig:model}. 

Our main contributions are as follows: 
\begin{itemize}
    \item  We develop a novel method that can learn the macroscopic dynamics from partial computation of the microscopic forces. Our method can significantly reduce the computational cost for force computations. 
    \item We theoretically justify that forces on a subset of the microscopic coordinates can be used as stochastic estimation to update latent model parameters, even if the macroscopic observables depend on all the microscopic coordinates.
    \item We empirically validate the accuracy, force computation efficiency, and robustness of our method through a variety of microscopic dynamics and latent model structures. 
\end{itemize}

\begin{figure}
    \centering
    \includegraphics[width=1.0\linewidth]{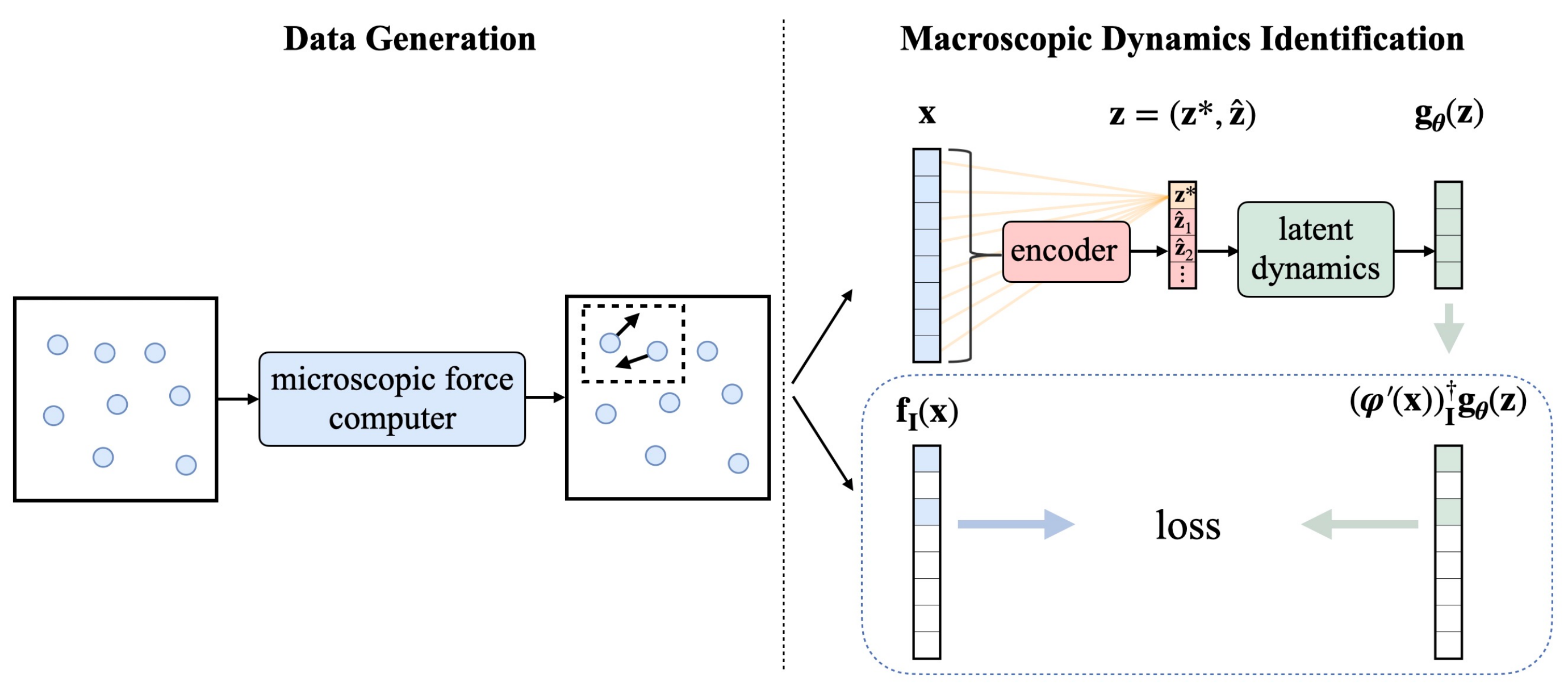}
    \caption{Overview of our method. \emph{Left}. Data generation workflow.  For each configuration $\bx$, forces on a subset of all the microscopic coordinates are calculated by the microscopic force calculator. \emph{Right}. Macroscopic dynamics identification. The macroscopic dynamics is mapped to the microscopic space first, then compared with the forces on a subset of the microscopic coordinates.}
    \label{fig:model}
    \vspace{-3mm}
\end{figure}
\section{Related Work}

\paragraph{Learning from Partial Observations} 
 Several works have sought to learn dynamics from partially observed state $\hat{\bx}$ utilizing machine learning~\citep{
ruelle1971nature,sauer1991embedology,takens2006detecting,ayed2019learning, ouala2020learning, huang2020learning, schlaginhaufen2021learning, lu2022discovering, stepaniants2023discovering}. 
For training, these methods reconstruct the unobserved state $\mathbf{\Tilde{x}}$ first and model the dynamics of $\mathbf{x}=(\mathbf{\hat{x}, \Tilde{x}})$. 
Our work assumes full state $\mathbf{x}$ but partial forces $\mathbf{f}$. 
Furthermore, we do not model the dynamics on state $\mathbf{x}$ directly, but rather on the latent space because the dimension of $\mathbf{x}$ would be extremely high for large systems.

\paragraph{Reduced Order Models}  
By modeling the dynamics in the latent space and then recovering the full states from them, 
reduced order models (ROMs) substitute expensive full order simulation with cheaper reduced order simulation~\citep{schilders2008model, fresca2020deep, lee2020model, hernandez2021deep, fries2022lasdi}. 

Our method can be thought to fall in the range of closure modeling. 
Unlike ROMs, we aim to model the dynamics of some given macroscopic observables directly and are not interested in recovering the microscopic states from the latent states. 

\paragraph{Equation-free Framework}
The equation-free framework (EFF) has sought to simulate the macroscopic dynamics efficiently~\citep{kevrekidis2003equation,samaey2006patch, liu2015acceleration}. 
The EFF is usually applied to partial differential equation (PDE) systems, and the macroscopic observables are chosen to be the solution of the PDE at the coarse spatial grid. 
In EFF, the macroscopic observables depend locally on the microscopic coordinates, allowing the macroscopic dynamics to be directly estimated from the microscopic simulations performed in small spatial domains. In contrast, the macroscopic observables may depend globally on the microscopic coordinates in our method, and the macroscopic dynamics may not be easily estimated from microscopic simulations performed in small spatial domains.


Another difference is that our method explicitly learns the macroscopic dynamics, while EFF can bypass explicit derivation of macroscopic evolution law by coupling microscale and macroscale dynamics. During simulation, EFF still requires microscopic simulation to be performed in small spatial domains and for short times, but our method can enable fast macroscopic simulation without requiring any microscopic simulation
However, for systems where the macroscopic evolution equations conceptually exist but are not available in closed form, EFF can efficiently handle such cases, but the learned dynamics in our method may involve approximation or statistical errors that are often challenging to estimate.


\section{Problem Setup}
We consider a microscopic system consisting of $n$ particles.
Let the state of the microscopic system be $\bx = (\bx_1, \cdots, \bx_n)\in \bbR^N,  
\bx_i\in \bbR^m, N = mn$, where $\bx_i\in\bbR^m$ is some physical quantity associated with the $i$-th particle, such as the position and velocity. Assume the dynamics of the microscopic system can be characterized by an ordinary differential equation(ODE):  
\begin{align}\label{eq:ODE}
    \frac{\ud  \bx(t)}{\ud t} = \mathbf{f}(\bx(t))
\end{align}
where $\mathbf{f}(\bx) = (\mathbf{f}_1(\bx), \cdots, \mathbf{f}_n(\bx)) \in \bbR^N$. We will call $\bx_i$ the \emph{microscopic coordinate} of the $i$-th particle and $\bff_i$ the \emph{force} acting on the microscopic coordinate $\bx_i$. 
In many real applications, we are interested in the dynamics of some macroscopic observables $\bz^{\ast} = \bphi^{\ast}(\bx)$. Here $\bphi^{\ast}$ is given beforehand and describes the functional dependence of $\bz^{\ast}$ on $\bx$. For example, $\bz^{\ast}$ can be chosen to be the instantaneous temperature or mean squared displacement in a Lennard-Jones system. 

The goal is to learn the dynamics of $\bz^{\ast}$ from microscopic simulation data. Existing methods that try to learn the macroscopic or latent dynamics require microscopic trajectories or forces on all the microscopic coordinates for training~\citep{champion2019data, fries2022lasdi, fu2023simulate, chen2024constructing}. 
The problem is: When the microscopic system size $N$ is very large such that the force computations on all the microscopic coordinates are impossible, these methods are no longer applicable. Instead, our method aims to learn from partial computation of microscopic forces. 

Consider we are given a microscopic force calculator $\mathcal{S}$ for computation of partial microscopic forces. Let the microscopic coordinate $\bx$ be sampled from a distribution $\mathcal{D}$. For each $\bx$, the microscopic force calculator $\mathcal{S}$ will first sample an $n$ dimensional random variable $\mathbf{I}(\bx) = (\mathbf{I}_{1}(\bx), \cdots, \mathbf{I}_{n}(\bx)) \sim \mathcal{P}_{\bx} \in \{0,1\}^n$ according to a certain strategy. Next $\mathcal{S}$ will calculate the corresponding \emph{partial forces} $\bff_{\mathbf{I}(\bx)} \coloneqq (\mathbf{f}_{\mathbf{I}_{1}(\bx)}, \cdots, \mathbf{f}_{\mathbf{I}_{n}(\bx)})$. For notation simplicity sometimes we will simply write $\bff_{\mathbf{I}(\bx)}$ as $\bff_{\mathbf{I}}$. $\mathbf{I}_i(\bx)$ indicate whether partial $i$ is chosen to calculate the force or not. 
If particle $i$ is chosen, $\mathbf{I}_{ i}(\bx) = 1, \mathbf{f}_{\mathbf{I}_{i}(\bx)} = \mathbf{f}_{i}$, otherwise  $\mathbf{I}_{i}(\bx) = 0, \mathbf{f}_{\mathbf{I}_i(\bx)} = \mathbf{0}$.
We require the sampling strategy $\mathcal{P}_{\bx}$ to satisfy: 
\begin{enumerate}
    \item For each $\mathbf{I}({\bx})\sim \mathcal{P}_{\bx}$, exactly $n\cdot p$ items are equal to 1 and the rest are 0. 
    \item Each particle can be chosen with equal probability p, \ie $\mathbb{P}(\mathbf{I}_{i}(\bx) = 1) = p, \mathbb{P}(\mathbf{I}_{i}(\bx) = 0) = 1 - p, i=1,\cdots, n$. 
\end{enumerate}
This means that the microscopic force calculator $\mathcal{S}$
can calculate forces on  $n\cdot p$ microscopic coordinates, and $0<p<1$ limits the computation capacity of the microscopic force calculator $\mathcal{S}$.   
The above requirement is consistent with real applications since it is difficult to calculate all the microscopic forces due to computational cost. 
Furthermore, for efficient calculation of the partial microscopic forces, we will  assume $\bff$ satisfies the following sparsity assumption:

\textbf{Assumption}:  
For a given error tolerance $\epsilon > 0$,  
there exists a constant $M\ll n$, such that for any $\bx \sim \mathcal{D}$ and $i\in \{1, \cdots, n\}$, we can always find an index set $J(\bx_i)\subset\{i=1,\cdots,n\}, |J(\bx_i)| < M$  which satisfies:
\begin{align}
    ||\bff_i(\bx_1, \cdots, \bx_n) - \Tilde{\bff_i}(\{\bx_i\}_{i\in J(\bx_i)}) ||_2 < \epsilon
\end{align}
Intuitively, the assumption implies that the computational cost of force $\mathbf{f}_i$ is independent of the microscopic system dimension $N$. Thus our microscopic force calculator $\mathcal{S}$ can compute partial forces in an efficient way. This assumption is prevalent in real-world applications. To better illustrate this, we give two examples here. 
The first example is about molecular dynamics. In molecular dynamics,  each $\bx_i$ represents the position $\mathbf{r}_i$ and velocities $\mathbf{v}_i$ of the $i$-th atom, \ie,  $\bx_i = (\mathbf{r}_i, \mathbf{v}_i)\in\bbR^6$. Then \cref{eq:ODE} becomes the Newton's law of motion:
    \begin{align}
        \frac{\ud \mathbf{r}_i}{\ud t} &= \mathbf{v}_i \\
        \frac{\ud \mathbf{v}_i}{\ud t} &= \frac{1}{m_i}  \mathbf{F}_i(\mathbf{r}_1, \cdots, \mathbf{r}_n), 
    \end{align}
    It is common to limit the range of pairwise interactions to a cutoff distance~\citep{allen2004introduction, ZHOU202241, vollmayr2020introduction}. 
To calculate the force on an atom, we only need to consider its interaction with other atoms that are within the cutoff. 
The second example is about systems modeled by partial differential equation (PDE).
We consider a time-dependent PDE and apply finite difference scheme to discretize the spatial derivatives. Then, the resulting semi-discretized equation takes the form of \cref{eq:ODE}, and each $\bx_i$ is the value at the $i$-th grid. $\bff_i$ only depends on those grids that are used for finite difference approximation of the spatial derivatives.

Let the training data generated by the microscopic force calculator $\mathcal{S}$ be $\{\bx^i, \bff_{\mathbf{I}(\bx^i)}\}_{i=1,\cdots, K}$. 
The data generation procedure is provided in \cref{alg:data}. 
We will introduce in the next section how we can learn the macroscopic dynamics from the training data with partial forces. 
\section{Method}\label{sec:method}
Existing works for macroscopic dynamics identification consist of two parts: dimension reduction and macroscopic dynamics identification~\citep{fu2023simulate, chen2024constructing}. We will follow these two parts. 
We start with most standard parts of closure modeling with an autoencoder, next, we turn to the main difficulty of macroscopic dynamics identification from partial forces.

\subsection{Autoencoder for Closure Modeling}
We will use an autoencoder to find the closure $\hat{\bz} = \hat{\bphi}(\bx)$ to $\bz^{\ast} = \bphi^{\ast}(\bx)$ such that $\bz = (\bz^{\ast}, \hat{\bz}) $ forms a closed system. Here we define $\bz$ as forming a closed system if its dynamics
$\dot{\bz}$ depends only on $\bz$, not any external variables. 
Note that in $\bz^{\ast } = \bphi^{\ast}(\bx)$, $\bphi^{\ast}$ is determined beforehand and contains no trainable parameters. This ensures $\bz^{\ast}$ represents the desired macroscopic observables and remains unchanged during the training of the autoencoder.

Denote the encoder by $\bphi = (\bphi^{\ast}, \hat{\bphi})$ and the decoder by $\bpsi$, 
we will minimize the following reconstruction loss: 
\begin{align}
    \mathcal{L}_{\text{rec}} = \textstyle \frac{1}{K}\sum_{i=1}^K \norm{\bx^i - \bpsi \circ \bphi(\bx^i)}_2^2
\end{align}
We also want $\bphipx\bphipx^T$ to be well-conditioned~(see \cref{sec:4.3}), then we 
impose constraints on the condition number of $\bphipx\bphipx^T$:
\begin{equation}
\begin{aligned}\label{eq:cond}
    \mathcal{L}_{\text{cond}} &= \textstyle \frac{1}{K} \sum_{i=1}^K \norm{\kappa(\bphipxi\bphipxi^T) -1}_2^2 \\
\end{aligned}\end{equation}
By enforcing $\bphipxi \bphipxi^T$ to be well-conditioned, we are also enforcing $\bphipxi\in \bbR^{d\times N}, d\ll N$ to have full row rank, which will be used later. The overall loss to train the autoencoder is :
\begin{align}\label{eq:AE_loss}
     \mathcal{L}_{\text{AE}} =  \mathcal{L}_{\text{rec}} + 
     \lambda_{\text{cond}} \mathcal{L}_{\text{cond}}, 
\end{align}
$\lambda_{\text{cond}} $ is a hyperparameter to adjust the ratio of $\mathcal{L}_{\text{cond}}$ and is chosen to be quite small in our experiments, \eg, $10^{-5}$ or $10^{-6}$. 
The aim of training the decoder $\bpsi$ is to help the discovery of the closure variables. The decoder $\bpsi$ will not be used for further macroscopic dynamics identification. To facilitate comparison between models tainted with all and partial forces, we will train the autoencoder first and freeze it for macroscopic dynamics identification. 

\subsection{Macroscopic Dynamics Identification}\label{sec:4.3}
We now address the difficulty of learning from data with partial microscopic forces. 
Substitute $\mathbf{z} = \bphi(\mathbf{x})$ into equation \cref{eq:ODE} and make use of chain rule, we get the dynamics of $\bz$:
\begin{align}\label{eq:dz}
    \frac{\mathrm{d}\mathbf{z}}{\mathrm{d} t} = \bphi^{\prime}(\mathbf{x}) \mathbf{f}(\mathbf{x}), \quad \bz(0) = \bphi(\bx_0)
\end{align}
here we use $\bphi^{\prime}(\mathbf{x})$ to denote the Jacobian of $\nabla_{\mathbf{x}}\bphi(\mathbf{x})$ for notation simplicity.
If the dynamics of $\bz$ is closed, the right-hand side of \cref{eq:dz} will only depend on $\bz$, and we parametrize it using a neural network $\bg (\bz) \approx \bphi^{\prime}(\mathbf{x}) \mathbf{f}(\mathbf{x}) $. 
Since we are only interested in macroscopic dynamics identification, the loss would be naturally defined on the macroscopic coordinates: 
\begin{align}\label{eq:lz}
    \textstyle \mathcal{L}_{\mathbf{z}}(\btheta) = \frac{1}{K} \sum_{i=1}^K ||\bphi^{\prime}(\mathbf{x}^i)\mathbf{f}(\mathbf{x}^i)  -\mathbf{g_{\theta}}(\bz^i)||^2 
\end{align}
\cref{eq:lz} is used commonly in existing work~\citep{champion2019data, fries2022lasdi, bakarji2022discovering, park2024tlasdi}. 

\begin{figure}[!t]
\begin{minipage}[!t]{0.40\linewidth}
\begin{algorithm}[H]
	\caption{Data generation.}
	\label{alg:data}
	\begin{algorithmic}[1]
	    \Require{ \Statex $\mathcal{D}$:configuration distribution \Statex  $\mathcal{S}$: microscopic force calculator \Statex $K$: training data size \Statex $\mathcal{P}$: partial index sampling strategy}
         \For{$i=1$ {\bfseries to} $K$}
            \State  $\bx^i \sim \mathcal{D}$
            \State  $\mathbf{I}(\bx^i) \sim \mathcal{P}_{\bx^i}$
            \State calculate $\mathbf{f}_{\mathbf{I}(\bx^i)}(\bx^i)$ using $\mathcal{S}$
        \EndFor
        \Return{$\{\bx^i, \mathbf{f}_{\mathbf{I}(\mathbf{\bx^i})}(\bx^i)\}_{i=1,\cdots,K}$}
        \vspace{1mm}
	\end{algorithmic}
\end{algorithm}
\end{minipage}
\begin{minipage}[!t]{0.56\linewidth}
\begin{algorithm}[H]
	\caption{Training procedure.}
	\label{alg:training}
	\begin{algorithmic}[1]
	    \Require{ \Statex $\{\bx^i, \mathbf{f}_{\mathbf{\bx^i}}(\bx^i)\}_{i=1,\cdots,K}$}: data
      \Statex $B$: minibatch size
       \Statex $\btheta_0$: model parameter
       \Statex opt: optimizer
	    \While{stopping criterion is not met}
                \State{sample $J$ $\subset \{1,\cdots, K\}, |J| = B$}
                \State{Calculate $\lxp$ in \cref{eq:lxp} with $\{\bx^i, \mathbf{f}_{\mathbf{\bx^i}}(\bx^i)\}_{i\in J}$} 
                \State{$\btheta_{t+1} \gets$ opt($\btheta_t$, $\nabla \lxp$)}
        \EndWhile
        \Return{model parameter}
        \vspace{2mm}
	\end{algorithmic}
\end{algorithm}
\end{minipage}
\end{figure}

The main difficulty of $\mathcal{L}_{\mathbf{z}}$ is that it includes the matrix-vector product $\bphi^{\prime}(\mathbf{x})\mathbf{f}(\mathbf{x})$.
Note that the $i$-th entry of $\bphi^{\prime}(\mathbf{x})\mathbf{f}(\mathbf{x})$ can be written as  $\sum_{j=1}^n \bphi^{\prime}_{ij}(\mathbf{x})  \mathbf{f}_j(\mathbf{x})$, and it is difficult to find an unbiased estimation of the $i$-th entry using a subset of $\{\mathbf{f}_j(\bx)\}_{j=1,\cdots, N}$. Thus the accurate calculation of $\lz$ requires the forces $\{\mathbf{f}_j(\bx)\}_{j=1,\cdots, N}$  on all the microscopic coordinates. 

The main idea of our method is to map the loss on the macroscopic coordinates back to the microscopic coordinates:
\begin{align}\label{eq:lx}
    \textstyle\mathcal{L}_{\mathbf{x}}(\btheta) =  \frac{1}{K} \sum_{i=1}^K  ||\mathbf{f}(\mathbf{x}^i)  - (\bphi^{\prime}(\mathbf{x}^i))^{\dagger}\mathbf{g_{\theta}}(\mathbf{z}^i)||^2 
\end{align}

where $(\bphi^{\prime}(\mathbf{x}^i))^{\dagger} \in \mathbb{R}^{N\times d}$ is the Moore-Penrose inverse. Since $\bphi^{\prime}(\mathbf{x}^i)$ is of full row rank, $(\bphi^{\prime}(\mathbf{x}^i))^{\dagger}$ is in fact the right inverse of $\bphi^{\prime}(\mathbf{x}^i)$, \ie, $\bphi^{\prime}(\bx^i) (\bphi^{\prime}(\mathbf{x}^i))^{\dagger}$ is an identity matrix. Below we will show our  main theoretical result: 
\begin{theorem}\label{thm:1}
    Assume for any $\bx\sim \mathcal{D}$, the eigenvalues of $\bphipx\bphipx^T$ are lower bounded by $b_1$ and upper bounded by $b_2$, $0<b_1\leq b_2$. Then:
    \begin{align}
    b_1 (\lx(\btheta) + C) & \leq \lz(\btheta) \leq b_2 (\lx(\btheta) +C)
    \end{align}
    here C does not depend on $\btheta$ hence does not affect the optimization.  
\end{theorem}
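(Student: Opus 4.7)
I will argue pointwise (for each training sample $\bx^i$) and then average. Fix an index $i$ and write, for brevity, $A=\bphipxi\in\bbR^{d\times N}$, $v=\bff(\bx^i)\in\bbR^N$, $u=\bg(\bz^i)\in\bbR^d$. Then the $i$-th summand of $\lz$ is $\|Av-u\|^2$, and the $i$-th summand of $\lx$ is $\|v-A^\dagger u\|^2$. Since $A$ has full row rank (guaranteed by the well-conditioning of $\bphipxi\bphipxi^T$ from the autoencoder loss), one has $A^\dagger=A^T(AA^T)^{-1}$ and $AA^\dagger=I_d$.

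\textbf{Step 1: orthogonal decomposition of $v$.} Write $v=v_1+v_2$ with $v_1\in\mathrm{range}(A^T)=(\ker A)^\perp$ and $v_2\in\ker A$. Then $Av=Av_1$, and $v_1=A^\dagger A v$ (this is the standard identity $A^\dagger A=P_{\mathrm{range}(A^T)}$). Crucially, $A^\dagger u\in\mathrm{range}(A^T)$ as well, so by Pythagoras
\begin{align}
\|v-A^\dagger u\|^2 = \|v_1-A^\dagger u\|^2 + \|v_2\|^2.
\end{align}

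\textbf{Step 2: identify the $v_1$-term with a weighted norm of the $\lz$-residual.} Using $v_1=A^\dagger Av$ and linearity, $v_1-A^\dagger u=A^\dagger(Av-u)$. Therefore
\begin{align}
\|v_1-A^\dagger u\|^2 = (Av-u)^T (A^\dagger)^T A^\dagger (Av-u) = (Av-u)^T(AA^T)^{-1}(Av-u),
\end{align}
where the last equality uses $(A^\dagger)^T A^\dagger=(AA^T)^{-1}AA^T(AA^T)^{-1}=(AA^T)^{-1}$.

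\textbf{Step 3: apply the eigenvalue bounds.} Since the eigenvalues of $AA^T$ lie in $[b_1,b_2]$, those of $(AA^T)^{-1}$ lie in $[1/b_2,1/b_1]$, hence
\begin{align}
\tfrac{1}{b_2}\|Av-u\|^2 \leq \|v_1-A^\dagger u\|^2 \leq \tfrac{1}{b_1}\|Av-u\|^2.
\end{align}
Combining with Step 1 and rearranging gives, for each $i$,
\begin{align}
b_1\bigl(\|v-A^\dagger u\|^2-\|v_2\|^2\bigr) \;\leq\; \|Av-u\|^2 \;\leq\; b_2\bigl(\|v-A^\dagger u\|^2-\|v_2\|^2\bigr).
\end{align}

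\textbf{Step 4: average and define $C$.} Summing over $i=1,\dots,K$ and dividing by $K$, define
\begin{align}
C \;=\; -\tfrac{1}{K}\sum_{i=1}^K \|v_2^i\|^2,
\end{align}
where $v_2^i$ is the component of $\bff(\bx^i)$ in $\ker\bphipxi$. Note that $v_2^i$ depends only on $\bff(\bx^i)$ and $\bphipxi$ (both frozen during the training of $\bg$), so $C$ is independent of $\btheta$. The desired two-sided bound follows immediately.

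\textbf{Main obstacle.} The computation itself is light; the conceptual step is recognizing the orthogonal decomposition $v=v_1+v_2$ and realizing that $A^\dagger u$ lives entirely in $\mathrm{range}(A^T)$, which is exactly what converts $\lx$ into $\lz$ modulo a $\btheta$-independent kernel-norm term. The only assumption-level subtlety is ensuring $A$ has full row rank so that $A^\dagger$ is a genuine right inverse and $(AA^T)^{-1}$ exists; this is supplied by $\mathcal{L}_{\text{cond}}$ in the autoencoder loss, which in turn makes the eigenvalue bounds $0<b_1\leq b_2$ meaningful.
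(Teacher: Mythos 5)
Your proof is correct and takes essentially the same approach as the paper: both isolate the $\btheta$-independent kernel component of $\bff(\bx)$ and then compare $\lz$ against the resulting weighted quadratic form $(A\bv-\bu)^T(AA^T)^{-1}(A\bv-\bu)$ via the eigenvalue bounds. The only difference is presentational — the paper works explicitly in SVD coordinates (splitting the sum over singular directions $i\le d$ versus $i>d$), whereas you use the coordinate-free projection identities $A^\dagger A = P_{\mathrm{range}(A^T)}$ and $(A^\dagger)^T A^\dagger = (AA^T)^{-1}$, which is arguably a tidier way to say the same thing.
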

The proof relies on singular value decomposition of $\bphi^{\prime}(\bx)$ and we provide the full proof in \cref{Appendix:thm1-proof}. 
\cref{thm:1} states that by minimizing $\lx(\btheta)$, 
we are actually narrowing the range of $\lz(\btheta)$. Hence we want $b_1$ and $b_2$ to be as close as possible, this is the reason why we constrain the condition number of $\bphipx\bphipx^T$ in \cref{eq:cond}.  
In the very special case where $b_1 = b_2$, minimizing $\lx(\btheta)$ is just equivalent to minimizing $\lz(\btheta)$. 

Note that in loss $\lx$, the term $\textstyle ||\mathbf{f}(\mathbf{x})  - (\bphi^{\prime}(\mathbf{x}))^{\dagger}\mathbf{g_{\theta}}(\mathbf{z})||$ can be rewritten as $\sum_{j=1}^n ||\mathbf{f}_j(\mathbf{x})  - (\bphi^{\prime}(\mathbf{x}))^{\dagger}_j\mathbf{g_{\theta}}(\mathbf{z})||$, and $\frac{1}{p}\sum_{j\in \mathbf{I}(\bx)} ||\mathbf{f}_j(\mathbf{x})  - (\bphi^{\prime}(\mathbf{x}))^{\dagger}_j\mathbf{g_{\theta}}(\mathbf{z})||$ can be regarded as its unbiased stochastic estimation.
Then, we can introduce our loss defined for partial forces:
\begin{align}\label{eq:lxp}
    \lxp(\btheta) &=   \textstyle \frac{1}{pK} \sum_{i=1}^K \norm{\mathbf{f}_{\textcolor{red}{\mathbf{I}(\bx^i)}}(\bx^i)  - (\bphi^{\prime}(\bx^i))^{\dagger}_{\textcolor{red}{\mathbf{I}(\bx^i)}} \bg(\bz^i)}_2^2 
 \end{align}
Here a constant $1/p$ is multiplied to $\lxp$ to guarantee: 
\begin{align}\label{eq:equal}
    \bbE_{\bx^1, \cdots, \bx^K} \bbE_{\mathbf{I}(\bx^1), \cdots, \mathbf{I}(\mathbf{x}^K)}\lxp(\btheta)  = \bbE_{\bx^1, \cdots, \bx^K} \lx(\btheta)
\end{align}
We provide the full proof of \cref{eq:equal} in \cref{Appendix:B.2}. 
By training with the model with $\lxp$, we can use data with partial forces as stochastic estimation to update model parameters. The full training procedure is provided in \cref{alg:training}. 

Note that in \cref{alg:data} during the data generation, $\mathbf{I}(x^i)$ is also sampled from its distribution. Thus, $\lxp$ is deterministic once the samples $\{\bx^i, \mathbf{f}_{\mathbf{I}(\mathbf{\bx^i})}(\bx^i)\}_{i=1,\cdots,K}$ are generated. 
In the limit, the estimation is unbiased:
\begin{theorem}[informal]\label{thm:2}
    Let $ \tlx(\btheta) = \bbE \lx(\btheta),   \btheta^{\ast}\in \arg \min_{\btheta} \tlx(\btheta)$, $\btheta_{K, p} \in  \arg \min_{\btheta}\lxp(\btheta)$, then under certain conditions: 
    \begin{align}\begin{aligned}
    \tlx(\btheta_{K, p}) &-  \tlx(\btheta^{\ast}) \overset{a.s.}{\longrightarrow} 0 \\
    \end{aligned}\end{align}
\end{theorem}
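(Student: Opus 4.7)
The plan is to recognize this as a standard M-estimator consistency result and reduce it to a uniform law of large numbers (ULLN) argument. First, I would observe that $\lxp(\btheta) = \frac{1}{K}\sum_{i=1}^K L_i(\btheta)$ where
\begin{equation*}
L_i(\btheta) = \tfrac{1}{p}\sum_{j=1}^n \mathbf{I}_j(\bx^i)\,\bigl\|\bff_j(\bx^i) - (\bphi'(\bx^i))^\dagger_j \bg_{\btheta}(\bz^i)\bigr\|_2^2,
\end{equation*}
and the $L_i$ are i.i.d.\ random functions of $\btheta$ once the joint randomness over $\bx^i \sim \mathcal{D}$ and $\mathbf{I}(\bx^i) \sim \mathcal{P}_{\bx^i}$ is taken. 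By \cref{eq:equal} (applied with $K=1$), we have $\bbE L_i(\btheta) = \tlx(\btheta)$ for every $\btheta$, so for each fixed $\btheta$ the strong law of large numbers yields $\lxp(\btheta) \to \tlx(\btheta)$ almost surely.

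The second and crucial step is to promote this pointwise convergence to uniform convergence, i.e., to prove
\begin{equation*}
\sup_{\btheta \in \Theta}\bigl|\lxp(\btheta) - \tlx(\btheta)\bigr| \xrightarrow{a.s.} 0.
\end{equation*}
The ``certain conditions'' alluded to in the statement should suffice to invoke a standard ULLN (e.g.\ of Jennrich or Newey--McFadden type): compactness of the parameter set $\Theta$, continuity of $\btheta \mapsto \bg_{\btheta}(\bz)$, and an integrable envelope $H(\bx, \mathbf{I})$ with $\sup_{\btheta \in \Theta}|L_1(\btheta)| \leq H$ and $\bbE H < \infty$. Under these hypotheses, $\{L_i(\cdot)\}$ forms a Glivenko--Cantelli class and the uniform convergence follows.

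The final step is the textbook M-estimator sandwich. Fix $\varepsilon > 0$; by uniform convergence there exists (almost surely) $K_0$ such that $\sup_{\btheta}|\lxp(\btheta) - \tlx(\btheta)| < \varepsilon$ for all $K \geq K_0$. Then
\begin{equation*}
\tlx(\btheta_{K,p}) \leq \lxp(\btheta_{K,p}) + \varepsilon \leq \lxp(\btheta^{\ast}) + \varepsilon \leq \tlx(\btheta^{\ast}) + 2\varepsilon,
\end{equation*}
where the middle inequality uses that $\btheta_{K,p}$ minimizes $\lxp$. Combined with $\tlx(\btheta_{K,p}) \geq \tlx(\btheta^{\ast})$ (by definition of $\btheta^{\ast}$), this gives $0 \leq \tlx(\btheta_{K,p}) - \tlx(\btheta^{\ast}) \leq 2\varepsilon$ eventually, yielding the almost-sure convergence claimed.

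The main obstacle is the ULLN step: one needs to choose a reasonable set of regularity hypotheses for the neural network family $\bg_{\btheta}$ (parameter compactness, Lipschitz dependence on $\btheta$) and, less obviously, verify integrability of an envelope dominating $L_1(\btheta)$ uniformly in $\btheta$ when $\bphi'(\bx)^\dagger$, $\bff(\bx)$, and $\bphi(\bx)$ are all random. If $\bx$ is not assumed compactly supported, one must impose tail/moment conditions on $\|\bff(\bx)\|$ and $\|(\bphi'(\bx))^\dagger\|$ (the latter is controlled by the condition-number bound from \cref{thm:1}). The rest is bookkeeping.
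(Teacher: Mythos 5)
Your argument is correct and follows the same high-level skeleton as the paper: (i) establish uniform convergence of $\lxp$ to its expectation, (ii) apply the standard M-estimator sandwich. Your three-term sandwich $\tlx(\btheta_{K,p}) \leq \lxp(\btheta_{K,p}) + \varepsilon \leq \lxp(\btheta^{\ast}) + \varepsilon \leq \tlx(\btheta^{\ast}) + 2\varepsilon$ is precisely the decomposition the paper writes in \cref{eq:bound_lx}, just phrased with an explicit $\varepsilon$.

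Where you diverge is in how uniform convergence is certified — which is exactly what the informal phrase \emph{``under certain conditions''} is hiding. You instantiate it via a classical Jennrich/Newey--McFadden ULLN, asking for a compact parameter set $\Theta$, continuity of $\btheta \mapsto \bg_{\btheta}$, and an integrable envelope for $L_1(\btheta)$. The paper instead states a Rademacher-complexity generalization bound (its \cref{thm1}), and the ``certain conditions'' are \emph{uniform boundedness} of the loss class $\mathcal{H}_{\bx,p}$ and $\mathcal{R}_K(\mathcal{H}_{\bx,p}) = o(1)$. These are two standard instantiations of the same abstract step. The paper's route is more abstract and avoids committing to $\Theta$ being compact or to a specific continuity modulus, at the cost of positing a qualitative hypothesis ($\mathcal{R}_K = o(1)$) that would still need to be verified for the neural-network class. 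Your route is more concrete and elementary: the hypotheses are directly checkable given architecture choices, and your closing remark — that the envelope condition requires moment control on $\|\bff(\bx)\|$ and $\|(\bphi'(\bx))^\dagger\|$, the latter being linked to the eigenvalue bound from \cref{thm:1} — is a genuine observation the paper does not spell out (the paper simply assumes $\norm{h}_\infty \le b_{\bx,p}$ outright). One small point worth noting: your pointwise SLLN step is logically redundant once uniform convergence is established, but it correctly identifies the role of \cref{eq:equal} in making $L_i$ an unbiased estimate, which is the same role the paper assigns it.
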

The proof utilized Rademacher complexity and a crucial assumption used in the proof is the uniform boundedness of $\lxp$. 
The formal version of \cref{thm:2} and the complete proof is provided in  \cref{Appendix:B.3}. 
\cref{thm:2} theoretically justifies the expected risk $\tlx(\btheta_{K, p})$ at the optimal parameter found by $\lxp$, converges to the optimal expected risk $\tlx(\btheta^{\ast}) $ as $K$ goes to infinity.
\section{Experiments }\label{sec:experiments}
In this section, we experimentally validate the accuracy, force computation efficiency,  and robustness through a variety of microscopic dynamics. 
\subsection{Force Computation Efficiency}\label{sec:4.1}

We first consider a one-dimensional spatiotemporal Predator-Prey system, mainly to validate the correctness and force computations efficiency of our method. 
\paragraph{Predator-Prey System}
The simplified form of the Predator-Prey system with diffusion ~\citep{murray2003multi} is
\begin{equation}\label{eq:Predator-Prey}
    \begin{aligned}
        \frac{\partial u}{\partial t} &= u(1-u-v) + D\frac{\partial^2 u}{\partial x^2}  \\
         \frac{\partial v}{\partial t} &= av(u-b) + \frac{\partial^2 v}{\partial x^2}, \quad x\in \Omega = [0,1],\quad t\in[0,\infty)\\
    \end{aligned}
\end{equation}
where $u, v$ denote the dimensionless populations of the prey and predator respectively, $a, b, D$ are three parameters. The complex dynamics of Predator-Prey interaction, including the pursuit of the predator and the evasion of the prey in ecosystems, can be described by \cref{eq:Predator-Prey}. 

We discretize the spatial domain of \cref{eq:Predator-Prey} into 50 uniform grids with $x_i = (i-\frac{1}{2})\Delta x,  \Delta x = 0.02,  1\leq i \leq 50$. Let $\mathbf{u}(t) = (u(x_1, t), \cdots, u(x_{50}, t)), \mathbf{v}(t) = (v(x_1, t), \cdots, v(x_{50}, t))$, then $(\mathbf{u}(t), \mathbf{v}(t)) \in \bbR^{100}$ are treated as the microscopic states. After approximating the spatial derivatives in \cref{eq:Predator-Prey} with the finite difference method, we consider the semi-discrete equation which is an $N=100$ dimensional ODE as our microscopic evolution law ~(see \cref{Appendix:C.1}).  
\begin{wrapfigure}[16]{R}{0.45\textwidth}
\centering
\vspace{-.7cm}%
\hspace{1cm}%
\begin{tabular}{c}
\setlength{\tabcolsep}{0pt}
\includegraphics[width=0.43\textwidth]{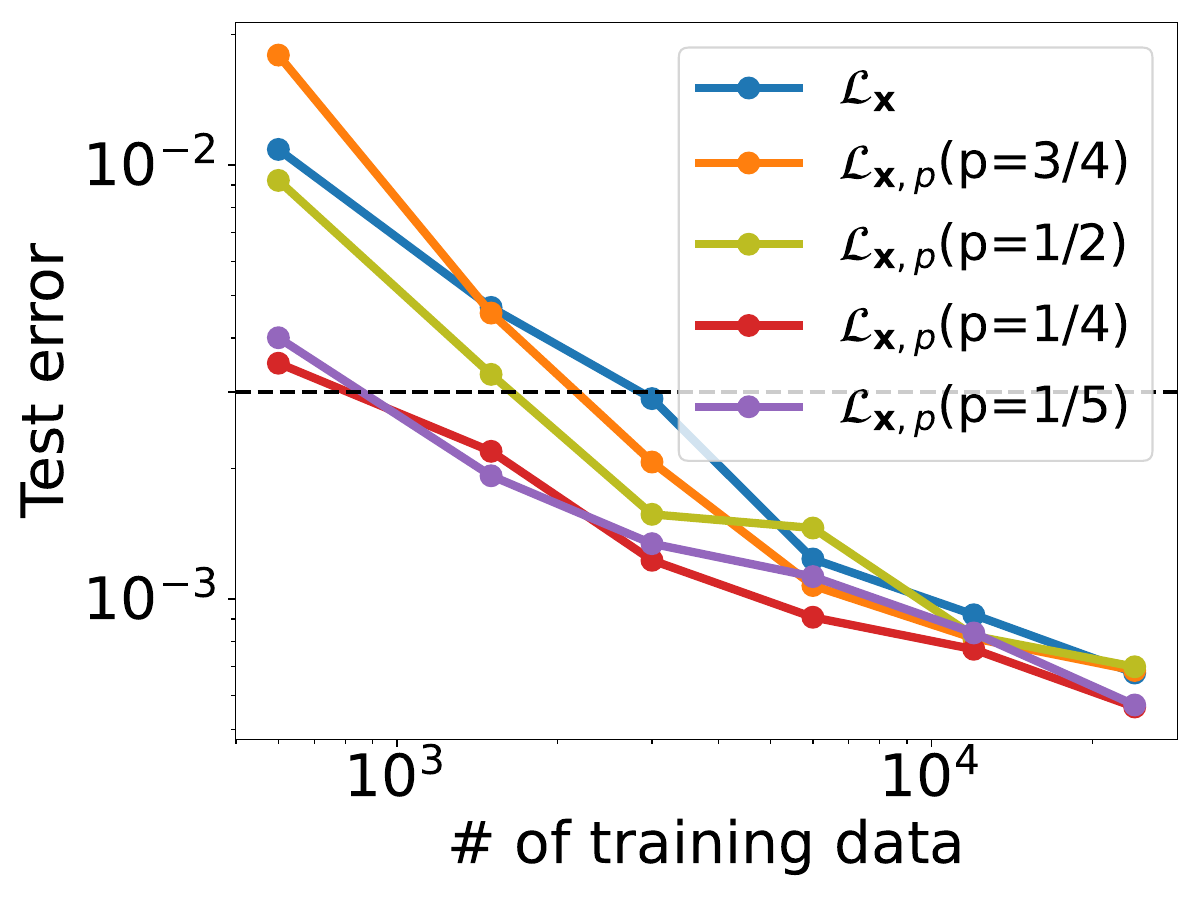}
\end{tabular}
\caption{
  Mean relative error on the test dataset of the Predator-Prey system. The black dashed line represents test error $=3\times 10^{-3}$. 
}
\label{fig:P-P}
\end{wrapfigure}

We choose the macroscopic observable of interest to be $\bz^{\ast}= (\Bar{u}, \Bar{v})$, the spatial average of the predator and the prey's population:
\begin{equation}
    \Bar{u} = \frac{1}{50}\sum_{i=1}^{50} u(x_i, t),\quad \Bar{v} = \frac{1}{50}\sum_{i=1}^{50} v(x_i, t)
\end{equation}
We find another 2 closure variables using the autoencoder, then the total dimension of the latent space $\bz$ is 4. We choose $\mathcal{D}$ to be the trajectory distribution of the state $\bx$. For the data generation with partial forces,  given $\bx$ we randomly choose forces on  $100\cdot p$ microscopic coordinate. For example, if $p=1/5$, then for each configuration, forces on 20 coordinates are calculated for training.

\paragraph{Results}
\cref{fig:P-P} shows the results on the test dataset which consists of 100 trajectories ~(for more detials, see Appendix \cref{tab:P-P}).
\emph{For models trained with partial microscopic forces, we report the equivalent number of training data with full forces throughout the paper.}  
 For example, for a model trained with $\lxp(p=1/5)$ on $3\times 10 ^3$ data, we will report the number of training data to be $3\times 10^3 \times 0.2 = 600$. 
The test error is defined to be the mean relative error of the macroscopic observables between the ground truth and the predicted trajectories as in \appcref{eq:error}. 

First, we observe that the mean relative error of all the models is around $10^{-4}$ when the number of training data is large enough. This tells us that training with $\lxp$ is correct and accurate, which is consistent with \cref{thm:2}.
The predicted trajectories fit quite well with the ground truth trajectories ~(see \appcref{fig:predator_prey_tra_0.1} and \cref{fig:predator_prey_tra_0.5}). 

We can conclude from \cref{fig:P-P} that, under the same number of training data, $\lxp$ with smaller $p~(1/4, 1/5)$ performs better. Similarly, to achieve the same performance, $\lxp$ with smaller $p$ requires less training data. We set the error tolerance to be $e_{\text{tol}}=3\times 10 ^{-3}$ and investigate how much training data is required to reach the error tolerance. In \cref{fig:P-P} the $x$-coordinate of the intersection point between the black dashed line and the other curves indicates the minimum data size required
If we arrange each model according to their minimal required training data, then $\lxp(p=1/5)\approx \lxp(p=1/4) < \lxp(p=1/2) <\lxp(p=3/4) <\lx$. 
Model trained with $\lxp(p=1/4,1/5)$ requires less data to reach $e_{\text{tol}}$, or equivalently, less force computations. This validates the \emph{force computation efficiency} of our method.  One explanation could be that there are many redundant information in the forces acting on all the microscopic coordinates. By using partial microscopic forces, $\lxp$ can explore more configurations $\bx$ given the same size of training data, thus can make use of more useful information. Another observation from \cref{fig:P-P} is that as the training data size increases, the gap between models trained with different $p$ narrows down. This is because as more data are provided, these data can contain almost all the information of the Predator-Prey system, thus more information will not lead to significant improvement. 

\subsection{Robustness to Different Latent Structures}\label{sec:4.2}
\begin{wrapfigure}[17]{R}{0.45\textwidth}
\centering
\vspace{-.7cm}%
\hspace{1cm}%
\begin{tabular}{c}
\setlength{\tabcolsep}{0pt}
\includegraphics[width=0.43\textwidth]{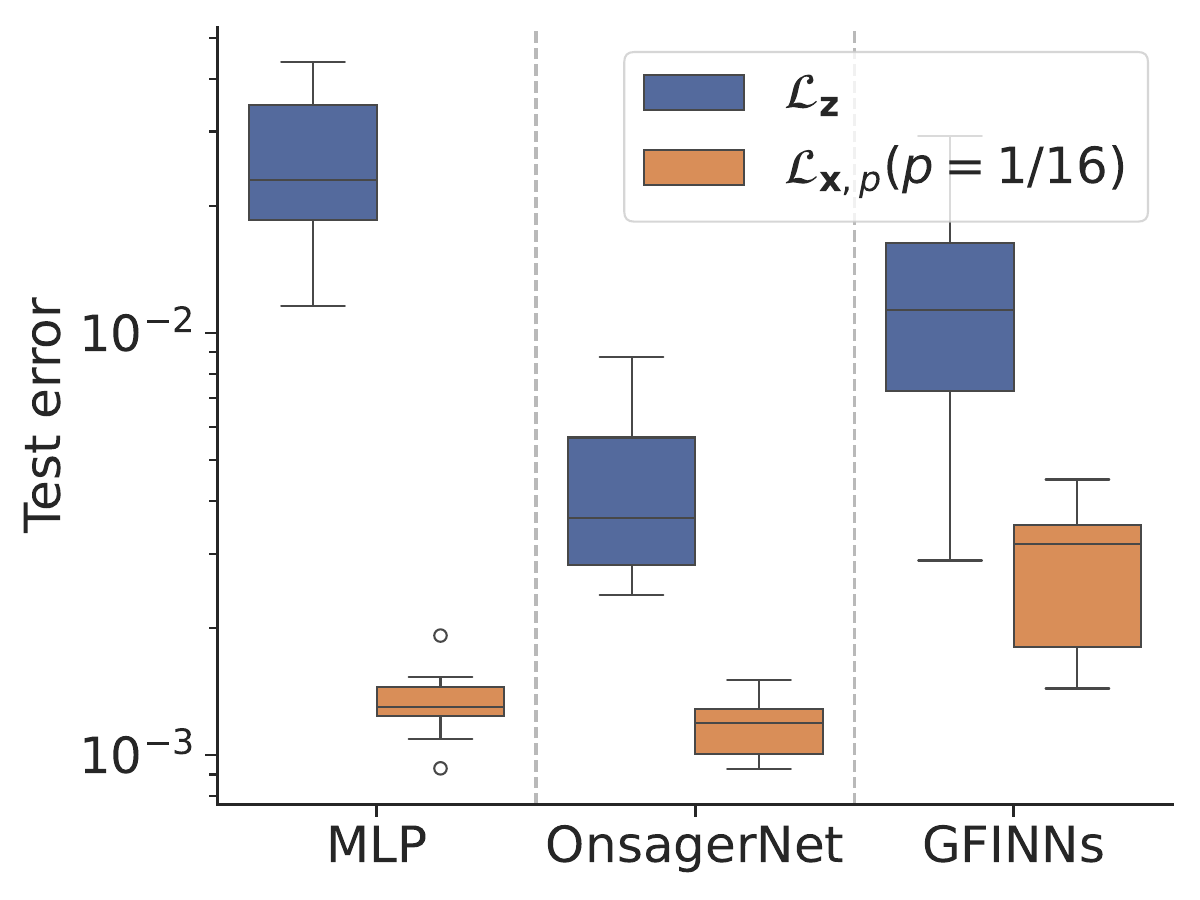}
\end{tabular}
\caption{
   Results on the Lennard-Jones system with 800 atoms and $N=4800$. Forces on 50 atoms are used to train $\lxp$ for all the latent model structures. Each model is trained with ten repeats.
}
\label{fig:LJ}
\end{wrapfigure}
Having validated the correctness and force computation efficiency of our method, we are ready to apply our method to a variety of latent structures. We will tackle the Lennard-Jones system in this subsection, and validate the robustness of our method to different latent model structures. 
Three latent model structures are considered: MLP,  OnsagerNet~\citep{yu2021onsagernet},  GFINNs~\citep{zhang2022gfinns}. Both the OnsagerNet and GFINNs endow the latent dynamical model with certain thermodynamic structure to ensure stability and interpretability. 
The specific implementations of these two models are slightly different. 

\paragraph{Lennard-Jones System}
The Lennard-Jones system is widely used in molecular simulation to study phase transition, crystallization and macroscopic properties of a system~\citep{hansen1969phase, bengtzelius1986dynamics, lin2003two, luo2004nonequilibrium}. 
The Lennard-Jones potential describes the interaction between two atoms $i$ and $j$ through the potential of the following form:
\begin{equation}
    V_{ij}(r) = \begin{cases} 
4\epsilon_{ij} [(\sigma_{ij}/r)^{12}-(\sigma_{ij}/r)^{6} ]
 & \text{if } r \leq r_{\text{cut}}, \\
0 & \text{if } r > r_{\text{cut}}.
\end{cases} 
\end{equation}

All the results in this experiment will be shown in reduced Lennard-Jones units. We consider a three-dimensional Lennard-Jones fluid with $N_{\text{atoms}}=800$ atoms of the same type. 
The microscopic state consists of the positions and velocities of the 800 atoms, thus the microscopic dimension is $N = 4800$. We simulate the Lennard-Jones system under NVE ensemble using  LAMMPS~\citep{LAMMPS}. 

We choose the instantaneous temperature ($T$) as our
macroscopic observables:
\begin{align}
    T = \frac{2}{3(N_{\text{atoms}} -1)} \times \sum_{i=1}^{N_{\text{atoms}}} \frac{m_iv_i^2}{2}
\end{align}
here $v_i$ is the velocity of the $i$-th atom and $m_i=1$. We find another 31 closure variables using the autoencoder, then the latent dimension is 32. In our experiment, we also adopt the trajectory distribution of microscopic state$\bx$ for $\mathcal{D}$. For data generation with partial forces, we choose $p=1/16$. For each $\bx$ we randomly choose 50 atoms for force computations.  

\paragraph{Results} All the models are trained with the same size of data. \cref{fig:LJ} shows the test error on 10 test trajectories. The test errors of using $\lxp(p=1/16)$ are relatively small ($\sim 10^{-3}$), which validates the accuracy of our model on the Lennard-Jones system. It is easy to observe from \cref{fig:LJ}
that for all the latent model structures, models trained with $\lxp$ can always outperform those trained with $\lz$. This validates $\lxp$ is \emph{robust over different latent model structures}.


\subsection{Robustness to Different Microscopic dynamics}
\begin{table}[t]
    \definecolor{h}{gray}{0.9}
    \caption{Summary of the results on each system. Results of the Predator-Prey and Lennard-Jones (small) system are taken from \cref{sec:4.1}, \cref{sec:4.2}. For each system, $\lz$ and $\lxp$ are trained with the same size of data. }
    \label{tab:all}
    \vspace{2mm}
	\centering
	\begin{adjustbox}{max width=\linewidth}
		\begin{tabular}{l c c c c c l l}
            \Xhline{3\arrayrulewidth}\bigstrut\bigstrut
              & Micro dim $N$ & Observables  & Latent dim $d$ & Partial labels $p$ & \multicolumn{1}{c}{$\lz$ }  & \multicolumn{1}{c}{\cellcolor{h}$\lxp$ } \\
             \Xhline{1\arrayrulewidth}\bigstrut\bigstrut
             Predator-Prey system & 100 & $\Bar{u}, \Bar{v}$ & 4 & 1/5 & 3.19 \scalebox{0.6}{$\pm$0.60} $\times 10^{-3}$   & \textbf{1.34 }\scalebox{0.6}{$\pm$0.16} $\mathbf{\times 10^{-3}}$ \\ 

                Allen-Cahn system & 40000 & free energy $\mathcal{E}(v)$ & 16 & 1/25 & 6.93 \scalebox{0.6}{$\pm$2.80} $\times 10^{-3}$  &  \textbf{3.98 }\scalebox{0.6}{$\pm$1.58} $\mathbf{\times 10^{-3}}$\\
        
                Lennard-Jones system (small)& 4800 &  temperature $T$ & 32 & 1/16 & 4.45 \scalebox{0.6}{$\pm$ 2.03} $\times 10^{-3}$ & \textbf{1.17} \scalebox{0.6}{$\pm$ 0.18}$\mathbf{\times 10^{-3}}$ \\
                Lennard-Jones system (large)& 307200 &temperature $T$ &32 & 1/1024  &  - & \textbf{4.96} \scalebox{0.6}{$\pm$0.56} $\mathbf{\times 10^{-3}}$\\
			\Xhline{3\arrayrulewidth}
		\end{tabular}
	\end{adjustbox}
\end{table}

We have already validated the accuracy and force computation efficiency of $\lxp$ on the Predatory-Prey system and the Lennard-Jones system, but their microscopic dimension is still not big enough. In this subsection, we focus on the robustness of our method to different systems including those with much larger microscopic dimension. We will consider two large systems: the Allen-Cahn system and a larger Lennard-Jones system with $51200$ atoms. 

\paragraph{Allen-Cahn System}  The Allen-Cahn equation is widely used to model the phase transition process in binary mixtures~\citep{allen1979microscopic, del2008toda,shen2010numerical, kim2021fast, yang2023fast}. 
We consider the 2-dimensional Allen-Cahn equation with zero Neumann boundary condition on a bounded domain:
\begin{equation}\begin{aligned}\label{eq:Allen_cahn}
    \partial_t v &= \nabla^2 v - \frac{1}{\epsilon^2}F^{\prime}(v)\ \text{on}\ \Omega=[0,1]\times [0,1] \\
    \partial_{\mathbf{n}} v &=0 \ \text{on} \ \partial \Omega, \\
\end{aligned}\end{equation}
where $v(-1\leq v\leq 1)$ denotes the difference of the concentration of the two phases. $F(v)$ is usually chosen to be the double potential taking the form of $F(v) = \frac{1}{4}(v^2-1)^2$. 
The Allen-Cahn equation is the $L^2$ gradient flow of the free energy functional $\mathcal{E}(v)\in\bbR$ in \cref{eq:energy}  ~\citep{Bartels2015}.
\begin{equation}\label{eq:energy}
    \mathcal{E}(v) = \int_{\mu} \left( \frac{1}{\epsilon^2} F(v) +\frac{1}{2} \norm{\nabla v}_2^2 \right) \ud x \ud y
 \end{equation}

 The free energy functional $\bz^{\ast}=\mathcal{E}(v) $  is a macroscopic observable of wide interest, hence we choose $\mathcal{E}(v)$ as our target macroscopic observable.

 The spatial domain is discretized into $200\times 200$ grids, then the dimension of the microscopic system is $N=40000$. We find another $31$ closure variables using the autoencoder hence the total dimension of the macroscopic system is 32, which is much smaller compared to the dimension of the microscopic system. We consider $\mathcal{D}$ to be the trajectory distribution of $\bx$. We choose $p=1/25$, each time the forces on 1600 grids are calculated for training $\lxp$. 

 \paragraph{Lennard-Jones System (large)} 
To further demonstrate the capacity of our method, 
we scale up the Lennard-Jones system in \cref{sec:4.2} to encompass $51200$ atoms, then $N=307200$. 
The size of the simulation box is increased from $10\times10\times10$ to $40\times40\times40$ to keep the density unchanged.   

\paragraph{Results} For a summary of the experiments and the results, we refer the reader to \cref{tab:all}. 
Note that for the Lennard-Jones system (large), we still use the forces on $50$ atoms for training, thus $p=1/1024$. 
For the training of $\lxp(p=1/1024)$, only 5000 configurations with partial forces are used due to memory limit, which is equivalent to $5000/1024\approx 5$ training data with forces on all the atoms.  Obviously, training data with size 5 is way too small, hence the results of $\lz$ are not reported for this system. 

From the results shown in \cref{tab:all}, one can observe that for a variety of problems, including those modeled by partial differential equations or molecular dynamics simulations, $\lxp$ can always outperform $\lz$. This shows the \emph{robustness of our method to a variety of systems}.
Moreover, the success of our method on the Lennard-Jones system (large) demonstrates the ability and efficiency of our method when scaled to very large systems. 

\begin{wrapfigure}[18]{R}{0.45\textwidth}
\centering
\vspace{-.7cm}%
\hspace{1cm}%
\begin{tabular}{c}
\setlength{\tabcolsep}{0pt}
\includegraphics[width=0.43\textwidth]{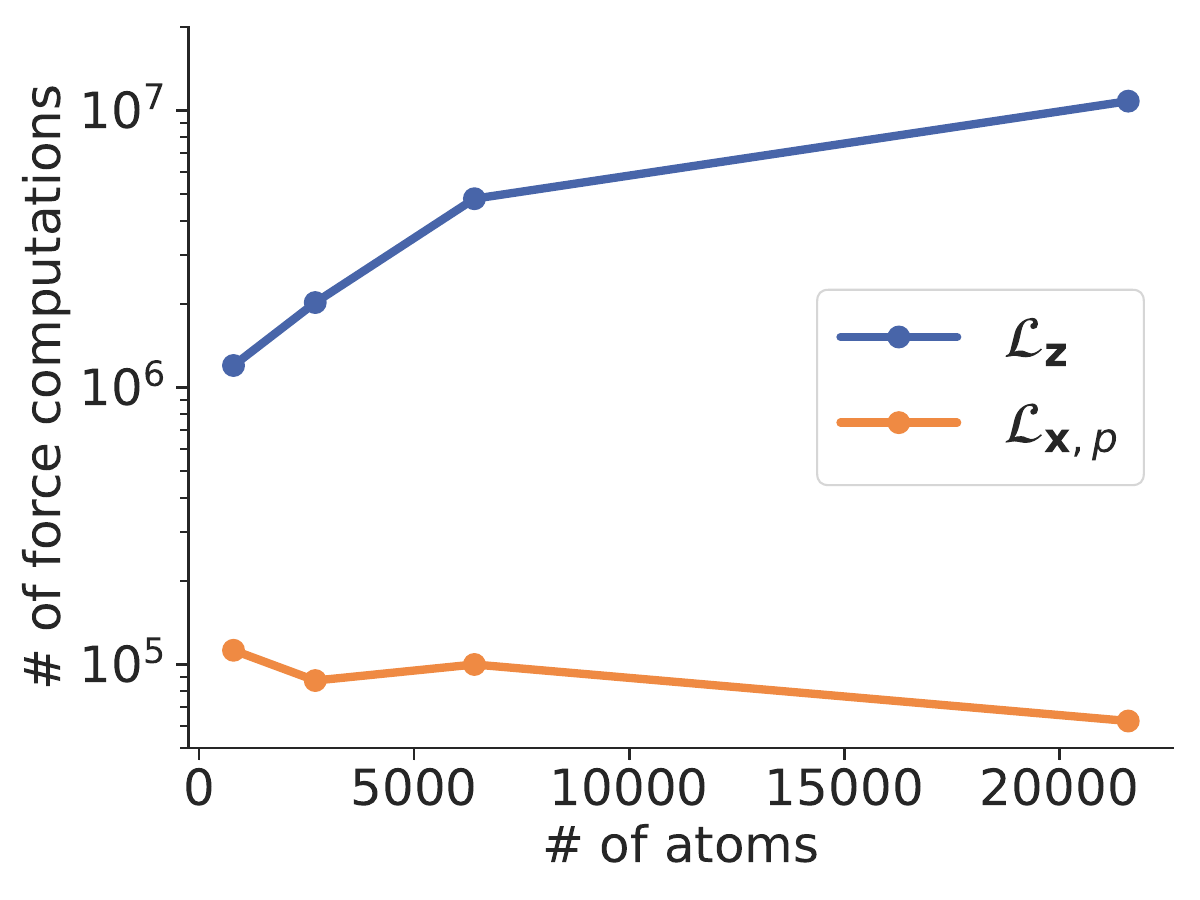}
\end{tabular}
\caption{
   Number of force computations required to achieve $e_{\text{tol}}=3\times10^{-3}$ on Lennard-Jones system with different sizes. Forces on 50 atoms are used to train $\lxp$ for systems of different sizes.
}
\label{fig:force}
\end{wrapfigure}
We also compare the number of force computations that are required for models trained with $\lz$ and $\lxp$ to reach test error $e_{\text{tol}}=3\times10^{-3}$. 
\cref{fig:force} shows the results on the Lennard-Jones system with different sizes. Lennard-Jones system with $800, 2700, 6400, 21600$ atoms are considered and the density is 0.8 for all the systems. The number of force computations here refers to the total number of forces on atoms that are used. For example, if $\lxp$ uses 100 configurations to train, and for each configuration, forces on 50 atoms are calculated, then the number of force computations would be $100\times50=5000$. 
From \cref{fig:force} we can observe that as the system size increases, the number of force computations required by $\lz$ continues increasing. 
In the experiment, we find that the number of training data does not change a lot. The increase in the number of force computations is mainly due to the system size increases, then for each configuration, more force computations are required. 
However, the number of force computations even decreases a bit. One possible explanation is that, as the system size increases, the dynamics become less fluctuating and are easy to learn.


\section{Conclusion}\label{sec:conclusion}
We present a framework for modeling the dynamics of macroscopic observables from partial computation of microscopic forces. 
We theoretically and experimentally demonstrate the accuracy, force computation efficiency, and robustness of our method through different problems.  Finally, we apply our method to a very large Lennard-Jones system which contains $51200$ atoms. 

While our method can learn the macroscopic dynamics from partial computation of microscopic forces, it relies on the sparsity assumption. For systems that do not satisfy the sparsity assumption, the calculation of partial computation of microscopic forces is not efficient, thus it is not beneficial to learn from partial forces computation. For example, in the McKean-Vlasov system,  the force on each microscopic coordinate depends on the collective behavior of all the other coordinates~\citep{Méléard1996}.

Another limitation is the structure of the autoencoder.
For particle systems such as the Lennard-Jones system, an ideal encoder should be permutation-invariant. Currently, we use MLP for the encoder, which can be improved.
Additionally, our method assumes the microscopic state to be sampled from a distribution $\mathcal{D}$. We choose $\mathcal{D}$ to be trajectory distribution in the experiments, but in reality trajectory distribution of large systems may be impossible to obtain. Active learning is commonly applied to efficiently select microscopic configurations for training~\citep{ang2021active, zhang2019active, farache2022active, kulichenko2023uncertainty,  duschatko2024uncertainty}. 
It is of interest to combine active learning and our proposed method to overcome the difficulty of the choice of $\mathcal{D}$. Furthermore, our method assumes the microscopic dynamics to be deterministic, another future direction could be generalizing our method to stochastic systems. 

\subsubsection*{Acknowledgments}
This research is supported by the National Research Foundation, Singapore under its AI Singapore Programme (AISG Award No: AISG3-RP-2022-028).


\bibliography{main}
\bibliographystyle{iclr2024_conference}

\appendix
\newpage
\section*{Appendix}
Let $\bx=(\bx_1, \cdots, \bx_n) \in \mathcal{X} \subset \bbR^N$, $\by = \bfx\in \mathcal{Y}\subset \bbR^N, \btheta \in \Theta$, 
here $\Theta$ is the set of our model parameters. 
Write $\mathcal{D}^K = \{\bx^i, \by^i\}_{i=1,\cdots,K}$ ,
then $\lz(\btheta),\lx(\btheta)$ can be written as: 
\begin{equation}\begin{aligned}\label{eq:lzlx}
    \lz(\btheta) &= \frac{1}{K}\sum_{(\bx, \by)\in\mathcal{D}^K} \norm{\bphipx\by -\bg(\bz)}_2^2 \\ 
    \lx(\btheta) &=  \frac{1}{K}\sum_{(\bx, \by)\in\mathcal{D}^K}  \norm{\by  - (\bphipx)^{\dagger}\bg(\bz)}_2^2  \\
\end{aligned}\end{equation}
In our method, we map the training procedure from the macroscopic coordinates to microscopic coordinates and use partial computation of the microscopic forces to train.    
We treat $\lz(\btheta)$ as baseline and give detailed theoretical analysis of the possible error introduced by using $\lxp(\btheta)$. 
The error can be controlled by two parts: (i) Convert loss from $\bz$ space to $\bx$ space, \ie, the error introduced by using $\lx(\btheta)$ (ii) Use partial labels, \ie, the error between $\lxp(\btheta)$ and $\lx(\btheta)$.

We will analyze the first part and the second part of the error accordingly~(\cref{Appendix:proof}). More experimental details are provided in \cref{Appendix:details}. 

\section{Theoretical Analysis}\label{Appendix:proof}
\subsection{Proof of Theorem 1}\label{Appendix:thm1-proof}
\begin{theorem*}
    Assume for any $\bx\sim \mathcal{D}$, the eigenvalues of $\bphipx\bphipx^T$ are lower bounded by $b_1$ and upper bounded by $b_2$, $0<b_1\leq b_2$. Then:
    \begin{align}
    b_1 (\lx(\btheta) + C) & \leq \lz(\btheta) \leq b_2 (\lx(\btheta) +C)
    \end{align}
    here C does not depend on $\btheta$ hence does not affect the optimization.  
\end{theorem*}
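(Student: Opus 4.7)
The plan is to do the comparison pointwise in $\bx$ via a singular value decomposition of $\bphipx$: the assumption that the eigenvalues of $\bphipx\bphipx^T$ lie in $[b_1, b_2]$ is equivalent to the squared singular values $\sigma_1^2, \ldots, \sigma_d^2$ of $\bphipx$ lying in $[b_1, b_2]$, which is exactly the two-sided spectral bound needed to sandwich one weighted least-squares loss inside the other.

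Concretely, I would write $\bphipx = U\Sigma V^T$ with $U \in \bbR^{d\times d}$ and $V \in \bbR^{N\times N}$ orthogonal. Since $\bphipx$ has full row rank (enforced by \cref{eq:cond}), every $\sigma_i > 0$ and $(\bphipx)^{\dagger} = V\Sigma^{\dagger} U^T$ is a genuine right inverse. Introducing the rotated vectors $\tilde{\bff} := V^T \bff(\bx) \in \bbR^N$ and $\tilde{\bg} := U^T \bg(\bz) \in \bbR^d$ and using orthogonal invariance of the Euclidean norm, a direct block calculation gives
\begin{align*}
\norm{\bphipx \bff(\bx) - \bg(\bz)}_2^2 &= \sum_{i=1}^d \sigma_i^2\bigl(\tilde{\bff}_i - \tilde{\bg}_i/\sigma_i\bigr)^2, \\
\norm{\bff(\bx) - (\bphipx)^{\dagger} \bg(\bz)}_2^2 &= \sum_{i=1}^d \bigl(\tilde{\bff}_i - \tilde{\bg}_i/\sigma_i\bigr)^2 + \sum_{i=d+1}^N \tilde{\bff}_i^2.
\end{align*}
The key structural observation is that the trailing sum $\sum_{i>d}\tilde{\bff}_i^2$ is the squared projection of $\bff(\bx)$ onto $\ker(\bphipx)$; it depends on $\bx$ but is independent of $\btheta$, so it can be collected into the constant $C$.

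With this decomposition in hand, applying the bound $b_1 \leq \sigma_i^2 \leq b_2$ termwise to the first sum, setting $C_{\bx} := \sum_{i>d}\tilde{\bff}_i^2$, and then averaging over the $K$ training samples immediately yields the claimed sandwich inequality, with $C := -\tfrac{1}{K}\sum_{i=1}^K C_{\bx^i}$ serving as the promised $\btheta$-independent constant. Honestly there is no real obstacle — the proof is essentially a two-line SVD calculation; the only delicate point worth spelling out is why no cross term appears, namely that the $\btheta$-dependent content lives in the first $d$ rotated coordinates (the row space of $\bphipx$, viewed in the $V$-basis) while the $\btheta$-independent residual lives in the orthogonal complement, so orthogonality of $V$ splits the $\lx$ norm cleanly into a piece directly comparable to $\lz$ and a constant nuisance term.
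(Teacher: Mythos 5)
Your proof is correct and follows essentially the same route as the paper's: an SVD of $\bphipx$, an orthogonal change of variables that decouples the $\btheta$-dependent part (living in the first $d$ rotated coordinates) from a $\btheta$-independent residual on $\ker(\bphipx)$, and a termwise application of $b_1 \le \sigma_i^2 \le b_2$ to the shared weighted sum. Your rotated-vector notation $\tilde{\bff} = V^T\bff$, $\tilde{\bg} = U^T\bg$ is a slightly cleaner packaging of what the paper writes out componentwise as $\mathbf{v}_i^T\by$ and $\mathbf{u}_i^T\bg(\bz)$, but the underlying computation and the choice of the constant $C$ are identical.
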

\begin{proof} 
We can write $\bphipx$ in the following form by leveraging singular value decomposition:  
\begin{align}
    \bphipx = \mathbf{U} \mathbf{\Sigma} \mathbf{V}^{T}, 
\end{align}
where $\bphipx\in \bbR^{d\times N},  \mathbf{\Sigma}\in \bbR^{d\times N}$ is a rectangular diagonal matrix,  $\mathbf{U}\in \bbR^{d\times d}$ and $ \mathbf{V}\in \bbR^{N \times N}$ are two orthogonal matrices, $ d\ll N$. $\mathbf{U}, \mathbf{\Sigma}, \mathbf{V}$ actually depends on $\bx$ but for simplicity we omit the dependence in notation. During the training of the autoencoder, we enforce $\bphipx$ to have full row rank, then the diagonal items are all nonzero, \ie, $\mathbf{\Sigma}_{i,i} = \lambda_i(\bx) \neq 0, i=1,\cdots, d$. 
We define $\mathbf{\Sigma}^{\dagger}\in \bbR^{N\times d}$ to be the rectangular diagonal matrix such that the diagonal items are $(\mathbf{\Sigma}^{\dagger})_{i, i} = \lambda_i^{-1}(\bx) \neq 0, i=1,\cdots, d$ and all the remaining items are zero. Actually $\mathbf{\Sigma}^{\dagger}$ is the Moore-Penrose inverse of $\mathbf{\Sigma}$. 

Then $(\bphipx)^{\dagger}$ can be calculated by:
\begin{align}
    (\bphipx)^{\dagger} = \mathbf{V} \mathbf{\Sigma}^{\dagger} \mathbf{U}^T  
\end{align}

If we denote the $i$-th column of $\mathbf{V}$ by $\mathbf{v}_i$, the i-th column of $\mathbf{U}$ by $\mathbf{u}_i$, then $\mathbf{V} = (\mathbf{v}_1, \cdots,\mathbf{v}_N), \mathbf{U} =(\mathbf{u}_1, \cdots, \mathbf{u}_d)$, and we can rewrite $\lz(\btheta)$ and $\lx(\btheta)$:
\begin{equation}
    \begin{aligned}
        \lz(\btheta) &= \textstyle \frac{1}{K} \sum_{(\bx,\by)\in\mathcal{D}^K} \norm{\bphipx\by  -\bg(\bz)}_2^2 \\
        &= \textstyle \frac{1}{K} \sum_{(\bx,\by)\in\mathcal{D}^K} 
 \norm{\mathbf{U} \mathbf{\Sigma} \mathbf{V}^{T} \by  -\bg(\bz)}_2^2  \\
        &=  \textstyle\frac{1}{K} \sum_{(\bx,\by)\in\mathcal{D}^K} \norm{\mathbf{U} \mathbf{\Sigma} \mathbf{V}^{T} \by  -\mathbf{U}\mathbf{U}^T\bg(\bz)}_2^2  \\ 
        &=\textstyle \frac{1}{K} \sum_{(\bx,\by)\in\mathcal{D}^K}   (\mathbf{\Sigma} \mathbf{V}^{T} \by  -\mathbf{U}^T\bg(\bz))^T \mathbf{U}^T\mathbf{U} (\mathbf{\Sigma} \mathbf{V}^{T} \by  -\mathbf{U}^T\bg(\bz)) \\
        &=  \textstyle\frac{1}{K} \sum_{(\bx,\by)\in\mathcal{D}^K} \norm{\mathbf{\Sigma} \mathbf{V}^{T} \by  -\mathbf{U}^T\bg(\bz)}_2^2  \\
        &=  \frac{1}{K} \textstyle\sum_{i=1}^d  \sum_{(\bx,\by)\in\mathcal{D}^K} \norm{\lambda_i(\bx) \mathbf{v}_i^T \by - \mathbf{u}_i^T \mathbf{g}_{\btheta}(\bz) }_2^2   \\
    \end{aligned}
\end{equation}
\begin{equation}
    \begin{aligned}
        \lx(\btheta) &= \textstyle \frac{1}{K} \sum_{(\bx,\by)\in\mathcal{D}^K} \norm{\by  - (\bphipx)^{\dagger}\bg(\bz)}_2^2  \\
        &= \textstyle \frac{1}{K} \sum_{(\bx,\by)\in\mathcal{D}^K} \norm{\by - \mathbf{V} \mathbf{\Sigma}^{\dagger} \mathbf{U}^T\bg(\bz)}_2^2  \\
        &= \textstyle \frac{1}{K} \sum_{(\bx,\by)\in\mathcal{D}^K} \norm{\mathbf{V}\mathbf{V}^T\by  - \mathbf{V} \mathbf{\Sigma}^{\dagger} \mathbf{U}^T\bg(\bz)}_2^2)   \\
        &= \textstyle \frac{1}{K} \sum_{(\bx,\by)\in\mathcal{D}^K}  (\mathbf{V}^T\by - \mathbf{\Sigma}^{\dagger} \mathbf{U}^T\bg(\bz))^T \mathbf{V}^T \mathbf{V} (\mathbf{V}^T\by - \mathbf{\Sigma}^{\dagger} \mathbf{U}^T\bg(\bz) \\
        &= \textstyle \frac{1}{K} \sum_{(\bx,\by)\in\mathcal{D}^K} \norm{\mathbf{V}^T\by- \mathbf{\Sigma}^{\dagger} \mathbf{U}^T\bg(\bz)}_2^2  \\
        &= \textstyle \frac{1}{K}  \sum_{i=1}^d  \sum_{(\bx,\by)\in\mathcal{D}^K}   \norm{ \mathbf{v}_i^T \by - \lambda_i^{-1}(\bx) \mathbf{u}_i^T \mathbf{g}_{\btheta}(\bz)}_2^2   + \frac{1}{K} \sum_{i=d+1}^N \sum_{(\bx,\by)\in\mathcal{D}^K}   \norm{\mathbf{v}_i^T\by}_2^2   \\
        &= \textstyle \frac{1}{K}   \sum_{i=1}^d \sum_{(\bx,\by)\in\mathcal{D}^K}   \lambda_i^{-2}(\bx) \norm{ \lambda_i(\bx) \mathbf{v}_i^T \by -  \mathbf{u}_i^T \mathbf{g}_{\btheta}(\bz)}_2^2   + \frac{1}{K} \sum_{i=d+1}^N  \sum_{(\bx,\by)\in\mathcal{D}^K}   \norm{\mathbf{v}_i^T \by}_2^2   \\
    \end{aligned}
\end{equation}
We define:  
\begin{equation}
    \begin{aligned}
        \label{eq:B.lxbar}
    \lxhat(\btheta) &= \textstyle\frac{1}{K} \sum_{i=1}^d   \sum_{(\bx,\by)\in\mathcal{D}^K}   \lambda_i^{-2}(\bx) \norm{ \lambda_i(\bx) \mathbf{v}_i^T \by-  \mathbf{u}_i^T \mathbf{g}_{\btheta}(\bz)}_2^2  \\
    C &= -\textstyle  \frac{1}{K} \sum_{i=d+1}^N  \sum_{(\bx,\by)\in\mathcal{D}^K}   \norm{\mathbf{v}_i^T \by}_2^2 \\ 
    \end{aligned}
\end{equation} 
then $\lx(\btheta) = \lxhat(\btheta) - C $, $C$ does not depend on $\btheta$ and: 
\begin{align}
    \underset{\btheta}{\min\ } \lx(\btheta) \iff  \underset{\btheta}{\min\ }  \lxhat(\btheta).
\end{align}
Comparing $\lz(\btheta)$ and $\lxhat(\btheta)$, we observe that the only difference between $\lz(\btheta)$ and $\lxhat(\btheta)$ is that for every term $\norm{ \lambda_i(\bx) \mathbf{v}_i^T \bfx -  \mathbf{u}_i^T \mathbf{g}_{\btheta}(\bz)}_2^2 $, there is a constant $\lambda_i^{-2}(\bx)$ multiplied to it. 
Hence $\lxhat(\btheta)$ is a weighted version of $\lz(\btheta)$.  Note that if the eigenvalues of $\bphipx$ are $\lambda_i(\bx), i=1,\cdots, d$, then the eigenvalues of $\bphipx\bphipx^T$ are $\lambda_i^2(\bx), i=1,\cdots, d$.  

Since the eigenvalues of $\bphipx\bphipx^T$ are lower bounded by $b_1>0$ and upper bounded by $b_2$, \ie, $\forall \bx \in \mathcal{X}, 0<b_1\leq \lambda_i^2(\bx) \leq b_2$, then:
\begin{align}
    b_2^{-1} \lz(\btheta) & \leq \lx(\btheta) + C\leq b_1^{-1} \lz(\btheta) 
\end{align}
or equivalently,
\begin{align}
    b_1 (\lx(\btheta) + C) & \leq \lz(\btheta) \leq b_2 (\lx(\btheta) + C)
\end{align}
 \end{proof}
By minimizing $\lxhat(\btheta)$, 
we are actually narrowing the region of $\lz(\btheta)$. Hence we want $b_1$ and $b_2$ to be as close as possible.  
In the extreme case where $b_1 = b_2$, minimizing $\lxhat(\btheta)$ is just equivalent to minimizing $\lz(\btheta)$. 
Another observation is that 
$\lxhat(\btheta)$ 
is a weighted version of $\lx(\btheta)$, 
and if there exists $i$ such that 
$\lambda_i(\bx)^2$ is too small compared to the others, 
 the weighted sum will be dominated by it
 in \cref{eq:B.lxbar}. 

 The above insights guide us to constrain the condition number of  $\bphipx\bphipx^T$ during the training of autoencoder, \ie, we require  $\bphipx\bphipx^T$ to be well-conditioned through ~\cref{eq:cond}. 

\subsection{Proof of \cref{eq:equal}} \label{Appendix:B.2}
\begin{equation}
    \begin{aligned}
        \bbE_{\bx^1, \cdots, \bx^K} \bbE_{\mathbf{I}(\bx^1), \cdots, \mathbf{I}(\mathbf{x}^K)}\lxp(\btheta) &= \textstyle  \bbE \big[  \frac{1}{pK} \sum_{i=1}^K \norm{\mathbf{f}_{\mathbf{I}(\bx^i)}(\bx^i)  - (\bphi^{\prime}(\bx^i))^{\dagger}_{\mathbf{I}(\bx^i)} \bg(\bz^i)}_2^2 \big] \\
    &=  \textstyle \frac{1}{p}\bbE_{\bx}\bbE_{\I} \big[\norm{\mathbf{f}_{\I}(\bx)  - (\bphipx)^{\dagger}_{\I} \bg(\bz)}_2^2 \big] \\
    &=  \textstyle \frac{1}{p}\bbE_{\bx}\bbE_{\I} \big[ \sum_{i=1}^n \mathbf{I}_{i}(\bx) 
 \cdot \norm{\mathbf{f}_{i}(\bx)  - (\bphipx)^{\dagger}_{i} \bg(\bz)}_2^2  \big] \\
 &=  \textstyle \frac{1}{p}\bbE_{\bx}\big[ \sum_{i=1}^n \bbE_{\I}  \mathbf{I}_{i}(\bx) 
 \cdot \norm{\mathbf{f}_{i}(\bx)  - (\bphipx)^{\dagger}_{i} \bg(\bz)}_2^2  \big] \\
 &=  \textstyle\bbE_{\bx}\big[ \sum_{i=1}^n  \norm{\mathbf{f}_{i}(\bx)  - (\bphipx)^{\dagger}_{i} \bg(\bz)}_2^2  \big] \\
 &=  \textstyle\bbE_{\bx}\big[\norm{\mathbf{f}(\bx)  - (\bphipx)^{\dagger} \bg(\bz)}_2^2  \big] \\
 &=\bbE_{\bx^1, \cdots, \bx^K} \lx(\btheta)\\
    \end{aligned}
\end{equation}

\subsection{Proof of Theorem 2}
\label{Appendix:B.3}
In this section we will prove the behavior of the minimizer found by $\lxp$ in the limit. 
Our proof relies on the statistical learning theory and especially Rademacher complexity. We will provide some background information first. 

Let $\mathcal{H}$ be a family of real-valued functions with domain $\mathcal{W}$ and integrable \wrt $\mathbb{P}$, 
 here $\mathbb{P}$ is a probability over $\mathcal{W}$. $\mathcal{W}^n = (\mathbf{w}^1, \cdots, \mathbf{w}^n)$ is a collection of \iid samples from probability distribution $\mathcal{P}$ defined over $\mathcal{W}$. 
 

We will use the tool of Rademacher complexity:  
\begin{definition}
Let $\mathcal{H}, \mathcal{W}^n, \mathbb{P}$ be defined as before.  
The empirical Rademacher complexity of $\mathcal{H}$ with respect to $\mathcal{W}^n$ is defined as:
\begin{align}
    \mathcal{R}_{\mathcal{W}^n}(\mathcal{H}) = \bbE_{\boldsymbol{\sigma}} \left [\sup_{h\in\mathcal{H}} \frac{1}{n}\sum_{i=1}^n\sigma_i h(\bw^i)\right],
\end{align}
$\boldsymbol{\sigma}=(\sigma_1,\cdots,\sigma_n), \{\sigma_i\}_{i=1}^n$ are independent random variables uniformly chosen from $\{-1, 1\}$, with $P(\sigma_i = 1) = P(\sigma_i = -1) = 0.5$. Taking the expectation with respect to $\mathcal{W}^n$ yields the Rademacher complexity of the functional class $\mathcal{H}$:
\begin{align}
    \mathcal{R}_n(\mathcal{H}) = \bbE_{\mathcal{W}^n} \bbE_{\boldsymbol{\sigma}} \left [\sup_{h\in\mathcal{H}} \frac{1}{n}\sum_{i=1}^n\sigma_i h(\bw^i)\right].
\end{align}
\end{definition}
Then one can derive the generalization bound in terms of the Rademacher complexity~\citep{Wainwright_2019}:

\begin{theorem}\label{thm1}
Assume $\mathcal{H}$ is uniformly bounded by $b$ (\ie, $ \norm{f }_\infty \leq b $. Then for all \( n \geq 1 \) and \( \delta \geq 0 \), we have
\begin{align}
\underset{h\in\mathcal{H}}{\sup} \left|\frac{1}{n}\sum_{i=1}^n h(\bw^i) - \bbE[h] \right| \leq 2\mathcal{R}_n(\mathcal{H}_{\bx}) + \delta
\end{align}
with probability at least $ 1 - 2\exp\left( -\frac{n \delta^2}{8b^2} \right) $. Consequently, as long as $ \mathcal{R}_n(\mathcal{H}_{\bx}) = o(1) $, we have $   \frac{1}{n}\sum_{i=1}^n f(X_i) - \bbE[f] \overset{a.s.}{\longrightarrow}  0 , \forall f \in \mathcal{H}_{\bx}$.
\end{theorem}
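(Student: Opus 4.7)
The plan is to establish the high-probability uniform deviation inequality by a standard two-step decomposition — concentration of the sup around its mean, plus a symmetrization bound on the mean — and then to derive the almost sure consequence by Borel-Cantelli. Write $F_n(\bw^1,\ldots,\bw^n) = \sup_{h\in\mathcal{H}}\bigl|\tfrac{1}{n}\sum_{i=1}^n h(\bw^i) - \bbE[h]\bigr|$; I want to show $F_n \le 2\mathcal{R}_n(\mathcal{H}) + \delta$ with the stated exponential probability.

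For the concentration step, the key observation is that the uniform bound $\|h\|_\infty \le b$ makes $F_n$ a bounded-differences functional. If a single coordinate $\bw^i$ is replaced by an independent copy $\bw^{i\prime}$, the empirical mean $\tfrac{1}{n}\sum_i h(\bw^i)$ shifts by at most $2b/n$ for every fixed $h$, hence the sup shifts by at most $2b/n$. Thus $F_n$ satisfies the bounded-differences condition with constants $c_i = 2b/n$, and McDiarmid's inequality yields a subgaussian tail $\Pr\bigl(F_n - \bbE F_n \ge t\bigr) \le \exp(-n t^2/(2b^2))$, together with the matching lower-tail bound. Choosing $t$ proportional to $\delta$ and folding in a numerical factor produces the claimed $2\exp(-n\delta^2/(8b^2))$ level.

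For the expectation bound I would invoke the classical ghost-sample symmetrization. Let $\tilde{\mathcal{W}}^n = (\tilde\bw^1,\ldots,\tilde\bw^n)$ be an independent copy of $\mathcal{W}^n$. Writing $\bbE[h] = \bbE\bigl[\tfrac{1}{n}\sum_i h(\tilde\bw^i)\bigr]$ and pulling the ghost expectation through the sup via Jensen's inequality gives
\begin{equation*}
\bbE F_n \;\le\; \bbE \sup_{h\in\mathcal{H}} \Bigl|\tfrac{1}{n}\sum_{i=1}^n \bigl(h(\bw^i) - h(\tilde\bw^i)\bigr)\Bigr|.
\end{equation*}
Because the pairs $(\bw^i,\tilde\bw^i)$ are exchangeable, inserting independent Rademacher signs $\sigma_i$ in front of each difference leaves the distribution (and hence the expectation) unchanged. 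A triangle inequality then splits the expression into two empirical Rademacher processes, each of which — after using the sign-symmetry of $\sigma_i$ to remove the outer absolute value — is bounded by $\mathcal{R}_n(\mathcal{H})$, yielding $\bbE F_n \le 2\mathcal{R}_n(\mathcal{H})$. Combining with the McDiarmid step gives the uniform deviation inequality.

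For the almost sure consequence, under the hypothesis $\mathcal{R}_n(\mathcal{H}) = o(1)$ I would, given any fixed $\varepsilon > 0$, take $n$ large enough that $2\mathcal{R}_n(\mathcal{H}) < \varepsilon/2$ and apply the inequality with $\delta = \varepsilon/2$; the resulting tail probability $2\exp(-n\varepsilon^2/(32b^2))$ is summable in $n$, so Borel-Cantelli yields $F_n \to 0$ almost surely, which implies the pointwise convergence stated for each $f\in\mathcal{H}$ a fortiori. The main obstacle I expect is not conceptual but bookkeeping: carefully tracking the numerical constants through McDiarmid (which naively delivers $n\delta^2/(2b^2)$ with $c_i = 2b/n$) and through symmetrization (which requires removing an absolute value from inside the sup, costing potentially an extra factor depending on whether $\mathcal{H}$ is closed under negation), so that the exponent $n\delta^2/(8b^2)$ and the prefactor $2\mathcal{R}_n(\mathcal{H})$ both emerge exactly as written, rather than with slightly different constants.
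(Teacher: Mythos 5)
The paper does not prove this theorem at all; it is quoted as a known result and attributed to Wainwright (2019), so there is no in-paper argument to compare against. Your reconstruction — McDiarmid concentration of the sup around its mean, ghost-sample symmetrization to bound that mean by $2\mathcal{R}_n(\mathcal{H})$, then Borel--Cantelli for the almost-sure statement — is exactly the standard proof (and the one in Wainwright's book), so the approach is right. Two small bookkeeping notes. (i) You do not need to ``fold in a numerical factor'': with $c_i = 2b/n$ McDiarmid already gives $\Pr(F_n - \bbE F_n \ge \delta) \le \exp(-n\delta^2/(2b^2))$, which together with $\bbE F_n \le 2\mathcal{R}_n(\mathcal{H})$ yields a bound strictly stronger than the stated $1 - 2\exp(-n\delta^2/(8b^2))$; the theorem's constants are simply loose, and your argument proves it a fortiori. (ii) The step ``use the sign-symmetry of $\sigma_i$ to remove the outer absolute value'' from $\bbE_\sigma\bigl[\sup_h\bigl|\tfrac1n\sum_i\sigma_i h(\bw^i)\bigr|\bigr]$ does not by itself land you on the paper's $\mathcal{R}_n$ (defined with no inner absolute value) without an extra factor of two, since the sup over $h$ sits inside. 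The cleaner (and standard) fix is to treat the two one-sided suprema $\sup_h(\hat{\bbE}h - \bbE h)$ and $\sup_h(\bbE h - \hat{\bbE}h)$ separately — each has expectation $\le 2\mathcal{R}_n(\mathcal{H})$ by symmetrization with no absolute values at all — apply McDiarmid to each, and union-bound; this also explains where the prefactor $2$ in the probability naturally arises. Your Borel--Cantelli step is fine as written.
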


Now in our problem, let $\mathcal{H}_{\bx, p}$ be the following one function class:
\begin{equation}
    \begin{aligned}
         \mathcal{H}_{\bx, p} &= \{h_{\btheta,p}: \mathcal{X}\times \mathcal{Y}\times\mathcal{I}\rightarrow \mathbb{R}; h_{\btheta, p}(\bx, \by, \I) = \textstyle\frac{1}{p}\norm{\by_{\I} - (\bphipx)_{\I}^{\dagger} \bg(\bz)}_2^2 , \btheta \in \Theta\} \\
    \end{aligned}
\end{equation}
Then
\begin{equation}\begin{aligned}
    \lxp(\btheta) &=  \frac{1}{K} \sum_{(\bx,\by)\in\mathcal{D}^K} h_{\btheta, p}(\bx,\by,\I) \\
\end{aligned}\end{equation}

Now we can prove \cref{thm:2}: 
\begin{theorem*}
    Let $ \tlx(\btheta) = \bbE \lx(\btheta),   \btheta^{\ast} \in \arg \min_{\btheta} \tlx(\btheta)$, $\btheta_{K, p} \in \arg \min_{\btheta}\lxp(\btheta)$, if $\mathcal{H}_{\bx, p}$ is uniformly bounded by $b_{\bx, p}$ and $\mathcal{R}_K(\mathcal{H}_{\bx, p})=o(1)$,  then: 
    \begin{align}\begin{aligned}
    \tlx(\btheta_{K, p}) &-  \tlx(\btheta^{\ast}) \overset{a.s.}{\longrightarrow} 0 \\
    \end{aligned}\end{align}
\end{theorem*}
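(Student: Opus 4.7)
The plan is to follow the standard statistical learning theory recipe: decompose the excess risk via the empirical minimizer inequality, bound the resulting supremum by uniform concentration, and then upgrade high-probability bounds to almost sure convergence via Borel--Cantelli. The crucial point is that the randomness lives over the joint samples $(\bx^i, \by^i, \mathbf{I}(\bx^i))$, and \cref{Appendix:B.2} already tells us that $\bbE[h_{\btheta,p}(\bx, \by, \mathbf{I})] = \bbE_{\bx}\norm{\bff(\bx) - (\bphipx)^{\dagger} \bg(\bz)}_2^2$, which is exactly $\tlx(\btheta)$ at a single sample. Hence $\bbE \lxp(\btheta) = \tlx(\btheta)$.

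First I would write down the standard three-term decomposition
\begin{equation*}
\tlx(\btheta_{K,p}) - \tlx(\btheta^{\ast}) = \bigl[\tlx(\btheta_{K,p}) - \lxp(\btheta_{K,p})\bigr] + \bigl[\lxp(\btheta_{K,p}) - \lxp(\btheta^{\ast})\bigr] + \bigl[\lxp(\btheta^{\ast}) - \tlx(\btheta^{\ast})\bigr].
\end{equation*}
The middle bracket is nonpositive because $\btheta_{K,p}$ minimizes $\lxp$. The remaining two brackets are each bounded in absolute value by $\sup_{\btheta\in\Theta} \lvert \lxp(\btheta) - \tlx(\btheta)\rvert$, so
\begin{equation*}
\tlx(\btheta_{K,p}) - \tlx(\btheta^{\ast}) \;\leq\; 2\sup_{\btheta\in\Theta} \bigl\lvert \lxp(\btheta) - \tlx(\btheta) \bigr\rvert.
\end{equation*}

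Next I would invoke the generalization bound stated in \cref{thm1} with the function class $\mathcal{H}_{\bx,p}$, the uniform bound $b_{\bx,p}$, and the $K$ i.i.d.\ samples $(\bx^i, \by^i, \mathbf{I}(\bx^i))$. This yields, for every $\delta > 0$,
\begin{equation*}
\sup_{\btheta\in\Theta} \bigl\lvert \lxp(\btheta) - \tlx(\btheta) \bigr\rvert \;\leq\; 2\mathcal{R}_K(\mathcal{H}_{\bx,p}) + \delta
\end{equation*}
with probability at least $1 - 2\exp\!\bigl(-K\delta^2/(8 b_{\bx,p}^2)\bigr)$. Combined with the decomposition above, this gives $\tlx(\btheta_{K,p}) - \tlx(\btheta^{\ast}) \leq 4\mathcal{R}_K(\mathcal{H}_{\bx,p}) + 2\delta$ on the same high-probability event.

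Finally, to pass from this to almost sure convergence, I would fix an arbitrary $\varepsilon > 0$ and use the hypothesis $\mathcal{R}_K(\mathcal{H}_{\bx,p}) = o(1)$ to pick $K_0$ so that $4\mathcal{R}_K(\mathcal{H}_{\bx,p}) < \varepsilon/2$ for all $K \geq K_0$, then set $\delta = \varepsilon/4$. The probability that $\tlx(\btheta_{K,p}) - \tlx(\btheta^{\ast}) > \varepsilon$ is then at most $2\exp\!\bigl(-K\varepsilon^2/(128 b_{\bx,p}^2)\bigr)$, which is summable in $K$. Borel--Cantelli yields the almost sure convergence for this $\varepsilon$, and intersecting over a countable sequence $\varepsilon_k \downarrow 0$ gives the claim. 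The main obstacle is conceptual rather than technical: one must be precise that the underlying i.i.d.\ samples are the triples including the random index sets $\mathbf{I}(\bx^i)$ drawn from $\mathcal{P}_{\bx^i}$, so that the expectation in \cref{thm1} matches $\tlx$ via \cref{Appendix:B.2}; uniform boundedness and the $o(1)$ Rademacher decay are assumed outright, so no further structural argument is needed.
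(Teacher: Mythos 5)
Your proposal is correct and follows essentially the same route as the paper: it uses the same identity $\bbE\lxp = \tlx$ from \cref{Appendix:B.2}, the same three-term decomposition around the empirical minimizer, and the same Rademacher generalization bound from \cref{thm1}. Your version is slightly more careful in explicitly passing through $\sup_{\btheta}|\lxp(\btheta)-\tlx(\btheta)|$ (needed because $\btheta_{K,p}$ depends on $K$) and in spelling out the Borel--Cantelli step that the paper compresses into the ``$\overset{a.s.}{\longrightarrow}0$'' annotations.
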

\begin{proof}
We define $\tlxp(\btheta) = \bbE \lxp(\btheta)$, then $\tlx = \tlxp$ by \cref{Appendix:B.2}. 
Applying \cref{thm1}, we get
\begin{align}
    \underset{\btheta \in \Theta}{\sup} \left| \lxp(\btheta) -\tlxp(\btheta) \right| \leq 2 \mathcal{R}_K({\mathcal{H}_{\bx, p}}) + \delta
\end{align}
with probability at least $1 - 2 \exp (-\frac{K\delta^2}{8b_{\bx, p}})$, and $\lxp(\btheta) \overset{a.s.}{\longrightarrow}\tlxp(\btheta), \forall \btheta \in\Theta$. 

 Note that $\btheta^{\ast} \in \arg \min_{\btheta} \tlx(\btheta) \in \arg \min_{\btheta} \tlxp(\btheta) $, then 
    \begin{equation}\label{eq:bound_lx}
 \begin{aligned}
     0\leq \tlx(\btheta_{K, p}) - \tlx(\btheta^{\ast}) = \underbrace{\tlxp(\btheta_{K, p}) - \lxp(\btheta_{K, p})}_{\overset{a.s.}{\longrightarrow}0} &+ \underbrace{\lxp(\btheta_{K, p}) - \lxp(\btheta^{\ast})}_{\leq 0} \\ &+  \underbrace{\lxp(\btheta^{\ast}) - \tlxp(\btheta^{\ast})}_{\overset{a.s.}{\longrightarrow}0}
 \end{aligned}\end{equation} 
 thus 
$
    \tlxp(\btheta_{K, p}) - \tlxp(\btheta^{\ast}) \overset{a.s.}{\longrightarrow} 0 
$.
\end{proof}

\section{Experiment Details}\label{Appendix:details}
\begin{figure}[h]
    \centering
    \begin{tabular}{@{}c@{\hspace{5pt}}@{\hspace{5pt}}c@{\hspace{2pt}}c@{}}
         \includegraphics[width=0.32\textwidth]{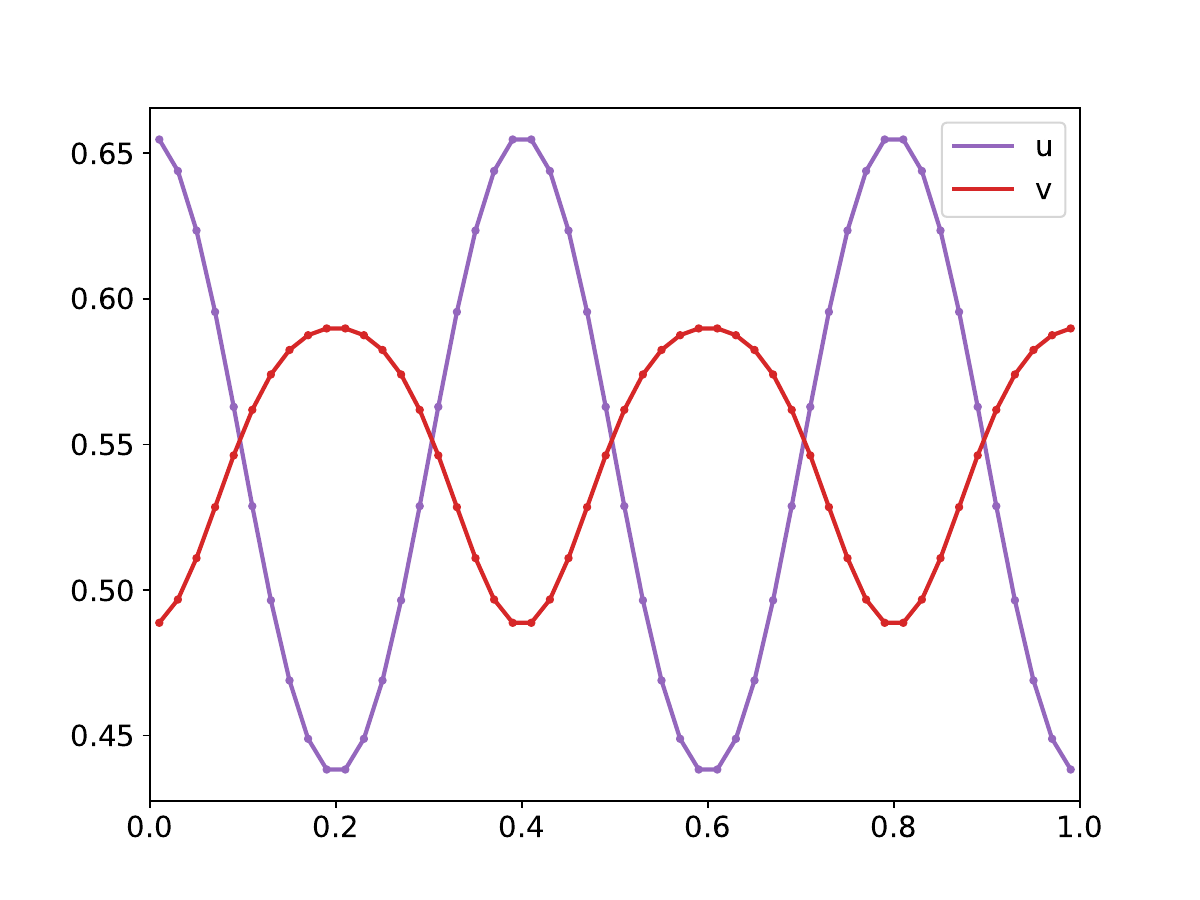} & \includegraphics[width=0.32\textwidth]{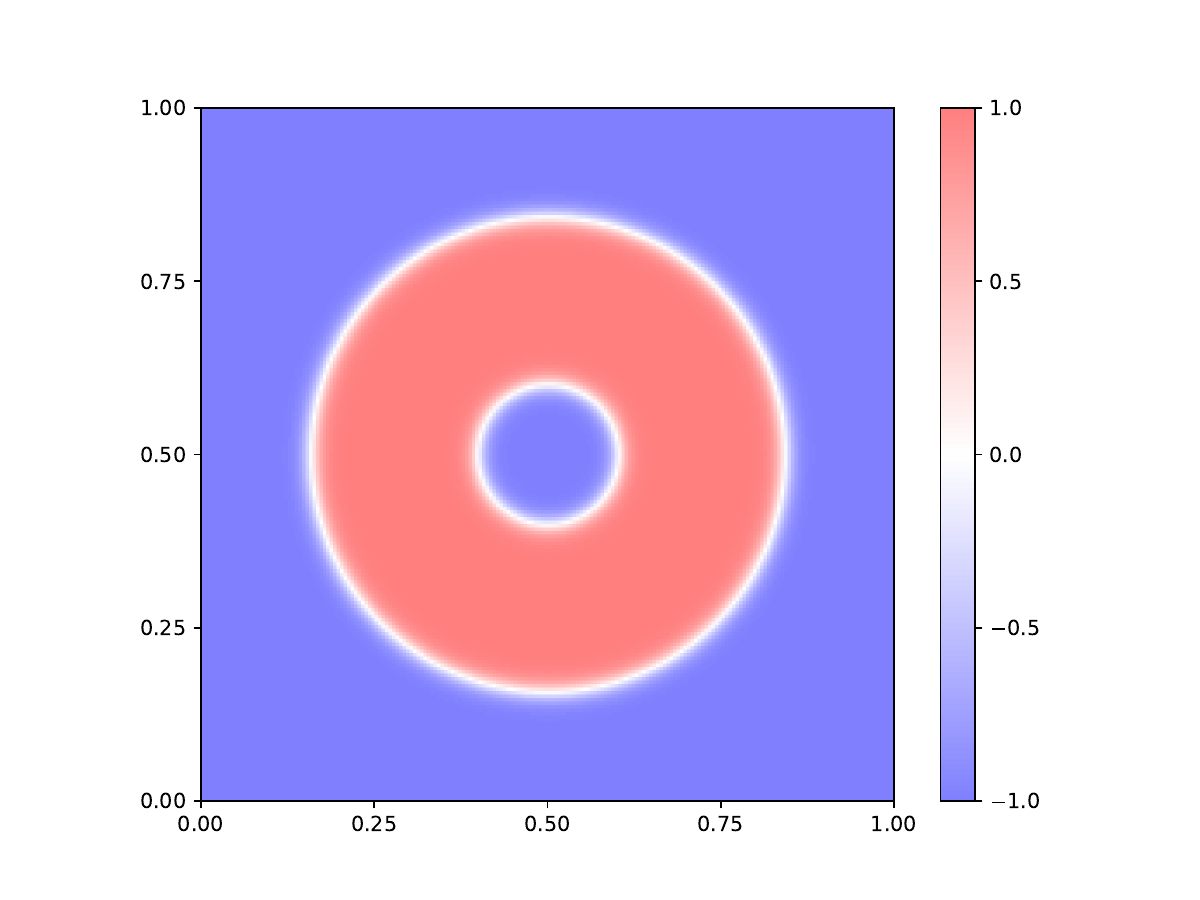} &
         \includegraphics[width=0.32\textwidth]{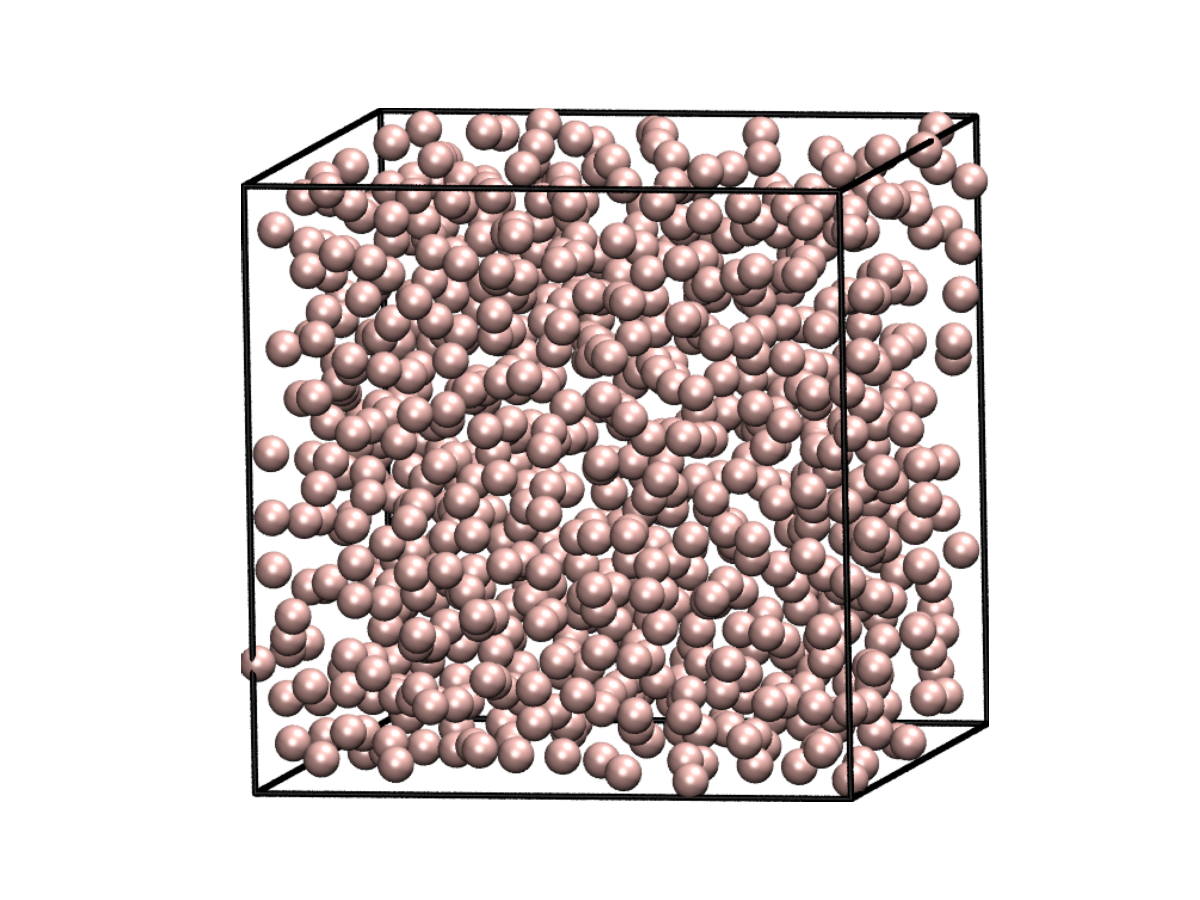}\vspace{-5pt}
         \\
         \quad \ \  {(a) Predator-Prey system }
         & \ \ \
         {(b) Allen-Cahn system}
         &  \ \ 
         {(c) Lennard-Jones system}
    \end{tabular}
    \caption{Visualization of the microscopic state of each system}
    \label{fig:exp}
\end{figure}

\subsection{Predator-Prey System}\label{Appendix:C.1}
We consider the Neumann boundary condition 
$\partial_{n}u=0, \partial_{n}v=0 \ \text{on}\ \partial\Omega$ and the following initial conditions
\begin{equation}\begin{aligned}
     u(x, 0) &=  \mu + \sigma\cos(5\pi x) \\
     v(x, 0) &=  1-\mu - \sigma\cos(5\pi x), \quad (\mu, \sigma)\in [0,0.2]\times [0.4, 0.6] \\
\end{aligned} 
\end{equation}
The Neumann boundary condition is commonly used in mathematical models of ecosystems, restricting any movement of the species out of the boundary. 

 We approximate the spatial derivatives in \cref{eq:Predator-Prey} with finite difference method:
\begin{equation}\begin{aligned}
    \frac{\partial^2 v}{\partial x^2}(x_i, t) &\approx  \frac{v(x_{i+1}, t) - 2v(x_i, t) + v(x_{i-1}, t) }{\Delta x^2}  \quad 2\leq i \leq 49 \\
    \frac{\partial^2 v}{\partial x^2}(x_1, t) &\approx \frac{v(x_{2}, t) - v(x_1, t)}{\Delta x^2}\\
    \frac{\partial^2 v}{\partial x^2}(x_{50}, t) &\approx \frac{v(x_{49}, t) - v(x_{50}, t)}{\Delta x^2} \\
    \end{aligned}.
\end{equation}
Let $h_u(u,v) = u(1-u-v), h_v(u, v) = av(u-b)$, and $h_u(\bu,\bv),h_v(\bu,\bv)$ denote the element-wise application of $h_u, h_v$ to each $u(x_i, t), v(x_i, t), 1\leq i\leq 50$. Then
 \begin{equation} \begin{aligned}\label{eq:finite_difference}
    \frac{\ud \mathbf{u}}{\ud t} &= h_u(\bu,\bv) \\
    \frac{\ud \mathbf{v}}{\ud t} &= h_v(\bu,\bv)+ \mathbf{A}\mathbf{v} \\
\end{aligned}\end{equation}
here $\mathbf{A}\in\bbR^{50\times 50}$ is a matrix defined according to ~\cref{eq:finite_difference}. Hence the predator-prey system after spatial discretization can be written in the form of ~\cref{eq:ODE}. 

In our experiment,  we choose $a=3, b=0.4, \lambda=0$.  The training parameter set $\mathcal{T}_{\text{train}}$ of pairs $(\mu, \sigma)$  are sampled uniformly from $[0,0.2]\times[0.4,0.6]$.
For testing, $\mathcal{T}_{\text{test}}$ is also sampled uniformly from $[0,0.2]\times[0.4,0.6]$ , but with a different random seed from  $\mathcal{T}_{\text{train}}$. The mean relative error is defined as :
\begin{equation}\label{eq:error}
e(\mathcal{T_{\text{test}}}) = \frac{1}{|\mathcal{T_{\text{test}}}|}\sum_{(\mu,\sigma)\in \mathcal{T_{\text{test}}}}\left(\frac{\sum_{n} \norm{\bz^{\ast}_{\text{true}}(t_n;\mu,\sigma) - \bz^{\ast}_{\text{pred}}(t_n;\mu,\sigma)}_2^2}{\sum_n\norm{\bz^{\ast}_{\text{true}}(t_n;\mu,\sigma)}_2^2 }\right) ,
\end{equation}
here we use $\bz(\cdot; \mu,\sigma)$ to denote the dependency of the solution on the initial condition.

\paragraph{Data Generation} 
The microscopic equation is solved with a uniform time step $\Delta t=0.01$ from $t=0$ to $t=30$ using the Euler method. We subsampled every tenth snapshot for training. 
During testing, the microscopic evolution equation is solved with the same $\Delta t=0.01$ using Runge-Kutta 4-order~RK4 solver. Then we encode the microscopic trajectories to obtain the ground truth latent trajectories. 
The predicted latent trajectories are obtained by encoding the initial microscopic state first, then solved using RK4 solver with $\Delta t=0.1, \Delta=0.5$ on $[0,30]$. \cref{fig:predator_prey_tra_0.1} and \cref{fig:predator_prey_tra_0.5} show the true and predicted trajectories. 

\begin{figure}[h]
\centering
\includegraphics[width=\linewidth, trim=9px 10px 0 0, clip]{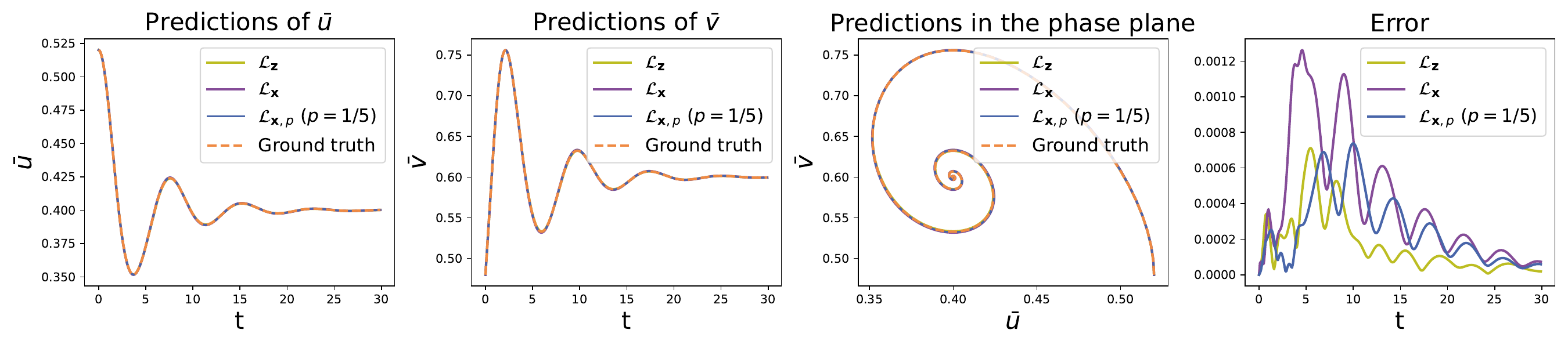}
\caption{Latent trajectories with initial condition $\mu=0.02, \sigma=0.52$ and $\Delta t=0.1$ in the Predator-Prey system.}
\label{fig:predator_prey_tra_0.1}
\end{figure}
\begin{figure}[h]
\centering
\includegraphics[width=\linewidth, trim=9px 10px 0 0, clip]{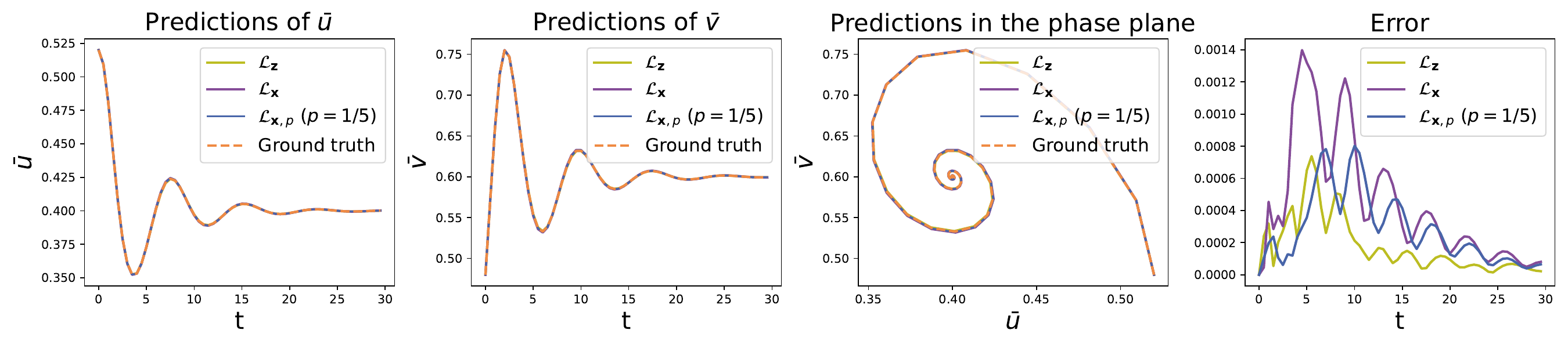}
\caption{Latent trajectories with initial condition $\mu=0.02, \sigma=0.52$ and $\Delta t=0.5$ in the Predator-Prey system.}
\label{fig:predator_prey_tra_0.5}
\end{figure}

\begin{table}[t]
    \definecolor{h}{gray}{0.9}
    \caption{Results on the Predator-Prey system. Models are trained with different training metric $\lx, \lxp(p=3/4,1/2,1/4,1/5)$. 
    Mean and standard deviation are reported over three repeats. } 
    \label{tab:P-P}
    \vspace{3mm}
	\centering
	\begin{adjustbox}{max width=\linewidth}
		\begin{tabular}{c c l l l l}
			\Xhline{3\arrayrulewidth}\bigstrut\bigstrut
			\# of training data & $\lx$ & \cellcolor{h} $\lxp(p=3/4)$& \cellcolor{h}$\lxp(p=1/2)$ & \cellcolor{h}$\lxp(p=1/4)$ & \cellcolor{h}$\lxp(p=1/5)$ \\
			\Xhline{1\arrayrulewidth} \\[-1em]
                $6.0 \times10^2$ & 
                1.09\scalebox{0.6}{ $\pm$ 0.57} $\times  10^{-2}$  & 
                1.79\scalebox{0.6}{ $\pm$ 1.11} $\times  10^{-2}$ & 
                9.22 \scalebox{0.6}{$\pm$ 4.46}$\times  10^{-3}$& 
                \textbf{3.50} \scalebox{0.6}{ $\pm$ 1.33} $\mathbf{\times  10^{-3}}$ & 
                4.00\scalebox{0.6}{ $\pm$ 1.78} $\times  10^{-3}$ \\
                \\[-1em]
                $1.5\times10^3$ & 
                4.70\scalebox{0.6}{ $\pm$ 0.46} $\times  10^{-3}$  & 
                4.56\scalebox{0.6}{ $\pm$ 1.97}$\times  10^{-3}$ & 
                3.30\scalebox{0.6}{ $\pm$ 1.60} $\times  10^{-3}$ & 
                2.19 \scalebox{0.6}{ $\pm$ 0.57}$\times  10^{-3}$  & 
                \textbf{1.92}\scalebox{0.6}{ $\pm$ 0.47}$\mathbf{\times  10^{-3}}$  \\
                \\[-1em]
                $3.0 \times10^3$ & 
                2.90\scalebox{0.6}{ $\pm$ 1.63} $\times  10^{-3}$&
                2.07\scalebox{0.6}{ $\pm$ 0.41} $\times  10^{-3}$ &
                1.57\scalebox{0.6}{ $\pm$ 0.45} $\times  10^{-3}$  & 
                \textbf{1.23}\scalebox{0.6}{ $\pm$ 0.16} $\mathbf{\times  10^{-3}}$   &
                1.34\scalebox{0.6}{ $\pm$ 0.20} $\times  10^{-3}$  \\
                \\[-1em]
                $6.0 \times10^3$ &
                1.24\scalebox{0.6}{ $\pm$ 0.16}$\times  10^{-3}$ &
                1.07\scalebox{0.6}{ $\pm$ 0.18} $\times  10^{-3}$  &  
                1.46 \scalebox{0.6}{ $\pm$ 0.59} $\times  10^{-3}$ &  
                \textbf{9.08}\scalebox{0.6}{ $\pm$ 0.25} $\mathbf{\times  10^{-4}}$   &  1.13\scalebox{0.6}{ $\pm$ 0.31} $\times  10^{-3}$  \\
                \\[-1em]
                $1.2\times10^4$ &   
                9.20\scalebox{0.6}{ $\pm$ 2.30} $\times  10^{-4}$ & 
                8.13\scalebox{0.6}{ $\pm$ 0.64} $\times  10^{-4}$  & 
                8.23\scalebox{0.6}{ $\pm$ 0.13} $\times  10^{-4}$  & 
                \textbf{7.66}\scalebox{0.6}{ $\pm$ 3.14} $\mathbf{\times  10^{-4}}$& 
                8.36\scalebox{0.6}{ $\pm$ 1.94} $\times  10^{-4}$ \\
                \\[-1em]
                $2.4\times10^4$ & 
                6.76\scalebox{0.6}{ $\pm$ 0.93} $\times  10^{-4}$ & 
                6.85\scalebox{0.6}{ $\pm$ 0.95}$\times  10^{-4}$ & 
                6.98 \scalebox{0.6}{ $\pm$ 1.20}$\times  10^{-4}$  & 
                \textbf{5.64}\scalebox{0.6}{ $\pm$ 0.18} $\mathbf{\times  10^{-4}}$  & 
                5.70\scalebox{0.6}{ $\pm$ 1.00} $\times  10^{-4}$  \\
			\Xhline{3\arrayrulewidth}
		\end{tabular}
	\end{adjustbox}
\end{table}

\subsection{Allen-Cahn System}
In our experiment, we consider the initial condition of a torus ~\citep{kim2021fast}:
 \begin{equation}
 \begin{aligned}
 \label{eq:allan_cahn_initial}
     v(x,y,0) = -1 + \tanh(\frac{r_1 - d(x,y)} 
     {\sqrt{2}\epsilon}) - \tanh(\frac{r_2 - d(x,y)}
     {\sqrt{2}\epsilon})
 \end{aligned}
 \end{equation}
 here $d(x,y) = \sqrt{(x-0.5)^2+(y-0.5)^2}$, $r_1 \in [0.3, 0.4]$ is the circumscribed circle radius and $r_2\in [0.1,0.15]$ is the inscribed circle radius. The initial condition is visualized in \cref{fig:exp} (b).

The free energy in \cref{eq:energy} also tends to decrease with time, following the energy dissipation law in \cref{eq:energy_gradient}. Then the minimization of the free energy drives the evolution of the system towards equilibrium.
 \begin{equation}\label{eq:energy_gradient}
 \begin{aligned}
    \frac{\partial \mathcal{E}(v)}{\partial t} = -\int_{\mu} \norm{\partial_t v}_2^2  \ud x \ud y
 \end{aligned}\end{equation}

\paragraph{Data Generation}
For both training and testing, 
the microscopic evolution law is solved using RK4 method with $\Delta t = 1/N = 2.5\times 10^{-5}$ from $t=0$ to $t= \text{min}(t_f, 1)$.Here $t_f$ is the time when the Allen-Cahn system reaches the equilibrium. We subsample every hundredth snapshots for training. 
We choose $\epsilon$ in \cref{eq:Allen_cahn} and \cref{eq:allan_cahn_initial} to be $\frac{10\times 200}{2\sqrt{2}\tanh^{-1}(0.9)}$ as in ~\citep{kim2021fast}.

 For testing, 50 parameter points $(r_1, r_2)$ are chosen uniformly from $[0.3, 0.4] \times [0.1, 0.15]$, and we report the mean relative error. The test parameter set $\mathcal{T}_{\text{test}}$ contains 50 parameter points $(r_1, r_2)$ are chosen uniformly from $[0.3, 0.4] \times [0.1, 0.15]$, but are sampled with a different random seed. 

\subsection{Lennard-Jones System}
We consider Lennard-Jones systems containing different atoms in this paper: $N_{\text{atoms}}=800, 2700, 6400, 21600, 51200$. We use periodic boundary conditions and fix the density to be 0.8, then the corresponding box side lengths are $10, 15, 20, 30, 40$. 
We simulate the Lennard-Jones system under the NVE ensemble using the LAMMPS ~\citep{LAMMPS}. 
 In our experiment, $\epsilon_{ij} = \sigma_{ij} =1, \forall i,j $ , and $r_{\text{cut}}=2.5$. 
The integration step is 0.001 and each trajectory is integrated for 250 steps. We sample the initial temperature randomly from $[0.5, 1.5]$. The initial velocities are then sampled from the Maxwell–Boltzmann distribution. 
For each system, the initial configuration has the same atom positions and velocity direction.
For testing, the initial temperatures are also randomly sampled from $[0.5, 1.5]$ but with a different random seed to the training data. 

\section{Implementation Details}\label{Appendix:D}
All the experiments are run on a single NVIDIA GeForce RTX 3090 GPU. 
For all the experiments, we use the multilayer perceptron ~(MLP) for both the encoder and the decoder. 
The autoencoders are trained with $\mathcal{L}_{\text{AE}}$ in  \cref{eq:AE_loss}. The condition number is the maximal eigenvalue $\lambda_{\max}$ divided by  the minimal eigenvalue $\lambda_{\min}$ of $\bphipx \bphipx^T$: 
\begin{align}
    \kappa\left(\bphipx\bphipx^T \right) &= \frac{|\lambda_{\max}(\bphipx \bphipx^T)|}{|\lambda_{\min}(\bphipx \bphipx^T)|} \geq 1 
\end{align}
Since $\bphipx \in \bbR^{d\times N}, \bphipx \bphipx^T\in \bbR^{d\times d}$, and $d$ is small, the condition number $\kappa(\bphipx \bphipx^T)$ can be calculated efficiently. In our experiments, we calculate $\lambda_{\text{max}}$ and $\lambda_{\text{min}}$ with \texttt{torch.linalg.svd}. 
To better compare $\lx$ and $\lxp$, once finish the training of the autoencoders, we freeze them and use the encoder for macroscopic dynamics identification. For the macroscopic dynamics identification,  MLP and GFINNs are used for the latent model in \cref{sec:4.2}. For the rest experiments, we adopt the structure of OnsagerNet to enhance the stability for latent dynamics prediction for $\bg$~\citep{yu2021onsagernet}. 



\section{Additional Experiments}

\subsection{Loss Curve of $\lz$, $\lx$, $\lxp$}

To give the readers a better idea of the behaviors of the loss $\lz, \lx, \lxp (p=1/4)$ trained on the same number of training data. \cref{fig:loss_plot} shows the training and test loss curve of different training metrics. Note that the training metrics of these models are different, but they are tested with the same metric \cref{eq:error}. 

\begin{figure}[t]
\centering
\includegraphics[width=\linewidth, trim=9px 10px 0 0, clip]{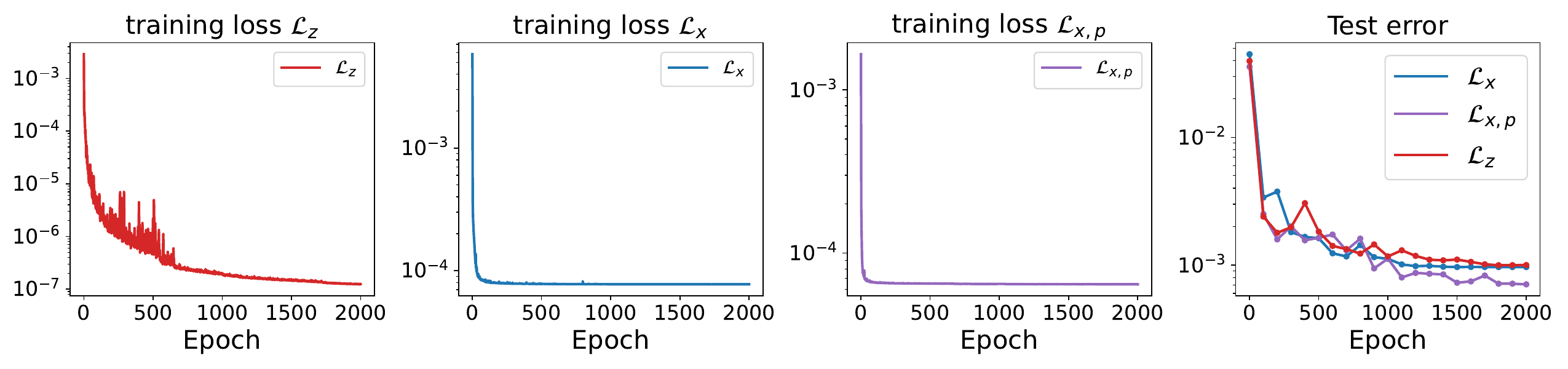}
\caption{
Loss curve of the $\lz, \lx, \lxp (p=1/4)$ on the Predator-Prey system. Models are trained with different loss functions $\lz, \lx, \lxp (p=1/4)$ on the same number of training data.
}
\label{fig:loss_plot}
\end{figure}

\subsection{Ablation Analysis of $\lambda_{\text{cond}}$}
To evaluate the influence of the hyperparameter $\lambda_{\text{cond}}$ on the performance of the loss $\lxp$, we conduct experiments with different values of $\lambda_{\text{cond}}$  and show the test error in \cref{tab:ablation}. 

\begin{table}[t]
    \definecolor{h}{gray}{0.9}
    \caption{Results on the Predator-Prey system. We train the autoencoder with different $\lambda_{\text{cond}}$ and then train the macroscopic dynamics model with loss $\lxp (p=1/5)$. The mean relative error of the macroscopic dynamics model is reported over three repeats. } 
    \label{tab:ablation}
    \vspace{3mm}
	\centering
	\begin{adjustbox}{max width=\linewidth}
		\begin{tabular}{c l l l l l l l }
			\Xhline{3\arrayrulewidth}\bigstrut
   $\lambda_{\text{cond}}$ &  $0$ & $10^{-8}$  &  $10^{-7}$ & $10^{-6}$ & $10^{-5}$ & $10^{-4}$ & $10^{-2}$   \\
		\Xhline{1\arrayrulewidth} \\
   [-1em]
               
            Test error of $\lxp (p= 1/5) $ &   $6.42\times 10^{-2} $ &	$2.62\times 10^{-3}$	&$2.84\times 10^{-3}$&	$\mathbf{8.36\times 10^{-4}}$	&$3.66\times 10^{-3}$	&$2.60\times 10^{-3}$&	$4.24\times 10^{-3}$  \\
			\Xhline{3\arrayrulewidth}
		\end{tabular}
	\end{adjustbox}
\end{table}
From \cref{tab:ablation} we can observe when $\lambda_{\text{cond}}$ increases from $0$ to $10^{-6}$, the test error gradually decrease. When $\lambda_{\text{cond}}$ further increases from $10^{-6}$ to $10^{-2}$, the test error gradually decrease. Among all the $\lambda_{\text{cond}}$ that we tried, the test error has the minimal value when $\lambda_{\text{cond}}=10^{-6}$. 

Theoretically, if $\lambda_{\text{cond}}$ is too low,  since $\mathcal{L}_{\text{AE}} =  \mathcal{L}_{\text{rec}} + \lambda_{\text{cond}} \mathcal{L}_{\text{cond}}$,
there may not have enough constraint on $\mathcal{L}_{\text{cond}}$ and 
the condition number of $\bphipx\bphipx^T$ may be very large.
By \cref{thm:1}, a small condition number of $\bphipx\bphipx^T$ can guarantee the effectiveness of $\lx$. But when the condition number is large, there is no guarantee that $\lx$ thus $\lxp$ can perform well. 
If instead $ \lambda_{\text{cond}} $ is too large, $\mathcal{L}_{\text{AE}} $ will be dominated by  $\mathcal{L}_{\text{cond}}$.Then the autoencoder may not reconstruct the microscopic dynamics well and, hence may not capture the closure terms well. If the latent space is not closed enough, we can not learn the macroscopic dynamics well.

\clearpage 



\end{document}